
\documentclass{llncs}

\usepackage{paralist}
\usepackage{url}
\usepackage{amsmath}
\usepackage{amsthm}
\usepackage{mathtools}
\usepackage{amsfonts}
\usepackage{bm}
\usepackage[ampersand]{easylist}
\usepackage[pdf]{pstricks}
\usepackage{mathtools}

\usepackage{graphicx} 
\usepackage[compatibility=false]{caption}
\usepackage{subcaption}

\usepackage{algorithm}
\usepackage{algpseudocode}





\newcommand{\algorithmicoutput}{\textbf{Output:}}
\newcommand{\Output}{\item[\algorithmicoutput]}

\newcommand{\mathbfup}[1]{\mathbf{\bm{#1}}}


\titlerunning{Relative Comparison Kernel Learning with Auxiliary Kernels}

\begin{document} 

\title{Relative Comparison Kernel Learning with Auxiliary Kernels}

\author{Eric Heim\inst{1} \and Hamed Valizadegan\inst{2} \and Milos Hauskrecht\inst{1}}
\institute{University of Pittsburgh, Department of Computer Science \\ \email{\{eth13, milos\}@cs.pitt.edu} \and NASA Ames Research Center \\ \email{hamed.valizadegan@nasa.gov}}



\maketitle

\begin{abstract}
In this work we consider the problem of learning a positive semidefinite kernel matrix from relative comparisons of the form: ``object $A$ is more similar to object $B$ than it is to $C$'', where comparisons are given by humans.  Existing solutions to this problem assume many comparisons are provided to learn a high quality kernel. However, this can be considered unrealistic for many real-world tasks since a large amount of human input is often costly or difficult to obtain.  Because of this, only a limited number of these comparisons may be provided.  We propose a new kernel learning approach that supplements the few relative comparisons with ``auxiliary'' kernels built from more easily extractable features in order to learn a kernel that more completely models the notion of similarity gained from human feedback.  Our proposed formulation is a convex optimization problem that adds only minor overhead to methods that use no auxiliary information.  Empirical results show that in the presence of few training relative comparisons, our method can learn kernels that generalize to more out-of-sample comparisons than methods that do not utilize auxiliary information, as well as similar methods that learn metrics over objects.
\end{abstract}

\section{Introduction}

\label{sec:intro}
The effectiveness of many kernel methods for unsupervised \cite{valizadegan2006generalized,filippone2008survey}, semi-supervised \cite{zhu2002learning,zhou2004learning,wang2008label}, and supervised \cite{scholopf2002learning} learning is highly dependent how meaningful the input kernel is for modeling similarity among objects for a given task.  In practice, kernels are often built by using a standard kernel function on features extracted from data.  For example, when building a kernel over clothing items, features can be extracted for each item regarding attributes like size, style, and color.  Then, a predefined kernel function (e.g. the Gaussian kernel function) can be applied to this feature representation to build a kernel over clothing.  However, for certain tasks, objects may not be represented well by extracted features alone.  Consider a product recommendation system for suggesting replacements for out of stock clothing items.  Such a system requires a model of similarity based on how humans perceive clothing, which may not be captured entirely by features.  For this, it is likely that human input is necessary to construct a meaningful kernel. 


In general, obtaining reliable information from humans can be challenging, but retrieving \emph{relative comparisons} of the form ``object $A$ is more similar to object $B$ than it is to object $C$'' has several attractive characteristics.  First, relative comparison questions are less mentally fatiguing to humans compared to other forms (e.g. quantitative comparison questions: ``On a scale from 1 to 10, how similar are objects $A$ and $B$?'') \cite{kendall1990rank}.  Second, there is no need to reconcile individual humans' personal scales of similarity. Finally, relative comparison feedback can be drawn implicitly through certain human-computer interactions, such as mouse clicks.  In this work, we consider the specific problem of learning a kernel from relative comparisons;  A problem we will refer to as \emph{relative comparison kernel learning} (RCKL).
 
Current RCKL methods \cite{agarwal2007generalized,tamuz2011adaptively,van2012stochastic} assume that all necessary human feedback to build a useful kernel is provided. This is often not the case in real-world scenarios.  A large amount of feedback is needed to build a kernel that represents a meaningful notion of how a human views the relationships among objects, and obtaining feedback from humans is often difficult or costly.  Hence, it is a realistic assumption that only a limited amount of feedback can be obtained.

In order to learn a meaningful kernel from only a limited number of relative comparisons we propose a novel RCKL method that learns a combination of \emph{auxiliary kernels} built from extracted features and a \emph{learned kernel}. The intuition behind this approach is that while human feedback is necessary to construct an appropriate kernel, some aspects of how humans view similarity among objects are likely captured in easily extractable features.  If ``auxiliary'' kernels are built from these features, then they can be used to reduce the need of many relative comparisons.  To learn the aforementioned combination, we formulate a convex optimization that adds a only small amount of computational overhead to traditional RCKL methods.  Experimentally, we show that by using auxiliary information, our method can learn a kernel that accurately models the relationship among objects from few relative comparisons, including relationships among objects not explicitly given.  More specifically, our method is shown to generalize to more held out relative comparisons than traditional RCKL methods.  In addition, we compare our method to similar, state-of-the-art methods in metric learning, and, again, we show that our method generalizes to more held out relative comparisons.
   
The remainder of the paper is organized as follows.  Section \ref{sec:pre} provides our formal definition of RCKL.  Section \ref{sec:theory} motivates our problem.  Section \ref{sec:learn} introduces a general framework for extending RCKL methods to use auxiliary information. Section \ref{sec:related} overviews related work.  Section \ref{sec:experiments} presents an evaluation of our method.  Section \ref{sec:future} concludes with future work.


 \section{Preliminaries}
 \label{sec:pre}
The RCKL problem considered in this work is defined by a set of $n$ objects, $\mathcal{X} = \{\mathit{x}_1, ..., \mathit{x}_n\} \subseteq \mathbb{X}$, where $\mathbb{X}$ is the set of all possible objects.  Similarity information among objects is given in the form of a set $\mathcal{T}$ of triplets:
 
 \begin{equation}
   \mathcal{T} = \{(a,b,c) | \mathit{x}_a \mathrm{\ is\ more\ similar\ to\ }\mathit{x}_b \mathrm{\ than\ }\mathit{x}_c\}
   \label{eq:TripletsDef}
 \end{equation}
 
 The goal is to find a positive semidefinite (PSD) kernel matrix $\mathbf{K} \in \mathbb{R}^{n \times n}$ that satisfies the following constraints:
 
 \begin{equation}
   \begin{array}{rl}
       \forall_{\left(a,b,c\right) \in \mathcal{T}} : & d_{\mathbf{K}}(\mathit{x}_a, \mathit{x}_b) < d_{\mathbf{K}}(\mathit{x}_a, \mathit{x}_c)\\ [5 pt]
      \mathrm{Where} & d_{\mathbf{K}}(\mathit{x}_a, \mathit{x}_b) = \mathbf{K}_{aa} + \mathbf{K}_{bb} - 2\mathbf{K}_{ab}
   \end{array}
   \label{eq:KConstraints}
 \end{equation}
 
Here, $\mathbf{K}_{ab}$ is the element in the $a$th row and $b$th column of $\mathbf{K}$, representing the similarity between the $a$th and $b$th objects.  The elements of $\mathbf{K}$ can be interpreted as the inner products of the objects embedded in a Reproducing Kernel Hilbert Space (RKHS), $\mathcal{H}_{\mathbf{K}}$, endowed with a mapping $\mathbfup{\Phi}_{\mathbf{K}} : \mathbb{X} \rightarrow \mathcal{H}_{\mathbf{K}}$.  With this interpretation $\mathbf{K}_{aa} + \mathbf{K}_{bb} - 2\mathbf{K}_{ab} = \|\mathbfup{\Phi}_{\mathbf{K}}(x_a) - \mathbfup{\Phi}_{\mathbf{K}}(x_b)\|_2^2$.  Thus, learning a kernel matrix $\mathbf{K}$ that satisfies the constraints in \eqref{eq:KConstraints} is equivalent to embedding the objects in a space, such that for all triplets $(a,b,c) \in \mathcal{T}$, $\mathit{x}_a$ is closer to $\mathit{x}_b$ than it is to $\mathit{x}_c$ without explicitly learning the mapping $\mathbfup{\Phi}_{\mathbf{K}}$.  We say that a triplet $\left(a,b,c\right)$ is \emph{satisfied} if the corresponding constraint in \eqref{eq:KConstraints} is satisfied.

One interpretation of \eqref{eq:KConstraints} is that triplets define a binary relation over distances among objects.  For example, if $\mathcal{X} = \{x_1,x_2,x_3\}$ and $\mathcal{T} = \{(1,2,3),$ $(2,1,3)\}$, then $\mathcal{T}$ defines a relation, $R_{\mathcal{T}}$ over $\mathcal{S}_{\mathcal{X}} = \{(d_{\mathbf{K}}(\mathit{x}_1, \mathit{x}_2), d_{\mathbf{K}}(\mathit{x}_1, \mathit{x}_3),$ $d_{\mathbf{K}}(\mathit{x}_2, \mathit{x}_3))\}$, such that $R_{\mathcal{T}} = \{(d_{\mathbf{K}}(\mathit{x}_1, \mathit{x}_2), d_{\mathbf{K}}(\mathit{x}_1, \mathit{x}_3)),(d_{\mathbf{K}}(\mathit{x}_1, \mathit{x}_2),d_{\mathbf{K}}(\mathit{x}_2, \mathit{x}_3))\}$.  With this in mind, we continue onto the next section where we discuss the RCKL problem in more depth.

\section{The Impact of Few Triplets}
\label{sec:theory}
To help motivate this work, we provide some insight into why it can be assumed, in practice, that only a limited number of triplets can be obtained from humans, and the potential impact it has on learning an accurate kernel.  First, we begin by defining some properties of sets of triplets:

\begin{definition}
\label{def:transClo}
Given a set of triplets $\mathcal{T}$, let $\mathcal{T}^{\infty}$ be the \emph{transitive closure} of $\mathcal{T}$ 
\end{definition}

\begin{definition}
\label{def:trans}
Given a set of triplets $\mathcal{T}$, let $\mathcal{T}^{trans} = \mathcal{T}^{\infty} \setminus \mathcal{T}$ 
\end{definition}

Definition \ref{def:trans} simply defines $\mathcal{T}^{trans}$ as the set of triplets that can be inferred by transitivity of triplets in $\mathcal{T}$.  For example, if $\mathcal{T} = \{(a,b,c), (c,a,b)\}$ then $\mathcal{T}^{trans} = \{(b,a,c)\}$.

\begin{definition}
A set $\mathcal{T}$ of triplets is \emph{conflicting} if $\exists a,b,c : (a,b,c) \in \mathcal{T}^{\infty} \wedge (a,c,b) \in \mathcal{T}^{\infty}$ 
\end{definition}

A set of conflicting triplets given by a source can be seen as inconsistent or contradictory in terms of how the source is comparing objects.  In practice, this can be handled by prompting the source of triplets to resolve this conflict or by using simplifying methods such as in \cite{mcfee2011learning}.  We defer to these methods in terms of how conflicts can be dealt with and consider the \emph{non-conflicting} case.  Let $\mathcal{T}^{total}$ be the set of all non-conflicting triplets that would be given by a source, if prompted with every relative comparison question over $n$ objects.  We begin by stating the following:

\begin{theorem}
\label{thm:totalNumTriplets}
For $n$ objects, $|\mathcal{T}^{total}| = \frac{1}{2}(n^3-3n^2+2n)$
\end{theorem}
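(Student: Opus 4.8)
The plan is to reduce Theorem~\ref{thm:totalNumTriplets} to an elementary counting argument. First I would rewrite the target quantity in factored form, $\frac{1}{2}(n^3-3n^2+2n) = \frac{1}{2}n(n-1)(n-2) = n\binom{n-1}{2}$, which already suggests the combinatorial reading: pick a ``head'' object and then an unordered pair among the remaining $n-1$ objects. (For $n \le 2$ both sides are $0$, consistent with there being no valid triplet, so we may assume $n \ge 3$.)

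Next I would make precise the notion of a \emph{relative comparison question}. By~\eqref{eq:TripletsDef} such a question asks, for a fixed object $x_a$ and two distinct other objects $x_b, x_c$, which of $x_b, x_c$ is more similar to $x_a$; hence a question is determined by a head index $a \in \{1,\dots,n\}$ together with an unordered pair $\{b,c\} \subseteq \{1,\dots,n\}\setminus\{a\}$, and the number of questions is $n\binom{n-1}{2}$. An answer to one such question is exactly one triplet, either $(a,b,c)$ or $(a,c,b)$. I would then show $\mathcal{T}^{total}$ contains exactly one triplet per question: at least one, since by definition the source is prompted with \emph{every} question and answers it; at most one, since if $(a,b,c)$ and $(a,c,b)$ were both in $\mathcal{T}^{total} \subseteq (\mathcal{T}^{total})^{\infty}$, then $\mathcal{T}^{total}$ would be conflicting, contradicting the non-conflicting assumption we are working under. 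Thus the assignment of each question to its unique answering triplet in $\mathcal{T}^{total}$ is a bijection onto $\mathcal{T}^{total}$, giving $|\mathcal{T}^{total}| = n\binom{n-1}{2} = \frac{1}{2}(n^3-3n^2+2n)$.

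One loose end I would close is whether transitivity could introduce triplets that are not direct answers and so push the count above $n\binom{n-1}{2}$. Here I would note that transitive inference (cf.\ Definitions~\ref{def:transClo}--\ref{def:trans}) only ever produces triplets over the same $n$ objects, i.e.\ answers to questions of the same form; such an inferred triplet is therefore either already the recorded answer to its question or is the swap of that answer, and the latter would once more make $\mathcal{T}^{total}$ conflicting. Hence in the non-conflicting case $\mathcal{T}^{total}$ already equals its own transitive closure, and the bijection above is exact. I expect the only real care to be in pinning down the question--triplet correspondence and in checking that ``non-conflicting'' is precisely the hypothesis that forbids a second answer per question; the arithmetic itself is routine.
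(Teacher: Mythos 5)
Your proof is correct and follows essentially the same counting argument as the paper: both count the distinct relative comparison questions (you via a head object plus an unordered pair of comparands, $n\binom{n-1}{2}$; the paper via pairs of pairwise distances sharing a common object, $\frac{1}{2}\binom{n}{2}(2n-4)$) and equate $|\mathcal{T}^{total}|$ with that count by noting each question contributes exactly one triplet. Your explicit use of the non-conflicting assumption to rule out two answers per question (and to show the transitive closure adds nothing) just makes rigorous what the paper leaves implicit in its parenthetical about the factor $\frac{1}{2}$.
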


Theorem \ref{thm:totalNumTriplets} is proven in Sec. \ref{sec:appTheorem1} of the appendix.  For even a small number of objects, obtaining most of $\mathcal{T}^{total}$ from humans would be too difficult or costly in many practical scenarios, especially if feedback is gained through natural use of a system, such as an online store, or if feedback requires an expert's opinion, such as in the medical domain.  Let $\mathcal{T} \subseteq \mathcal{T}^{total}$ be the set of triplets actually \emph{obtained} from a source.  We say that a triplet $t$ is \emph{unobtained} if $t \in \mathcal{T}^{total} \setminus \mathcal{T}$.  To build a model that accurately reflects the true notion of similarity given by a source of triplets, an RCKL method should learn a kernel $\mathbf{K}$ that not only satisfies the obtained triplets, but also many of the triplets in $\mathcal{T}^{total}$, including those that were unobtained.  This means that given small number of obtained triplets, an RCKL method should somehow \emph{infer} a portion of the unobtained triplets in order to build an accurate model of similarity.  In the remainder of this section we consider two possible scenarios where unobtained triplets could potentially be inferred.


For the following analysis, we assume that triplets are obtained one at a time.  Also, we assume that the order in which triplets are obtained is random.  This could be a reasonable assumption in applications, such as search engines, where the goal of asking relative comparison questions that are most useful in the learning process comes secondary to providing the best search results, and as such, no assumptions can be made regarding which relative comparison questions are posed to a source.  Thus, the worst-case in the following analysis is with adversarial choice of both $\mathcal{T}^{total}$ and the order in which triplets are obtained.  Let $\mathcal{T}_i$ be the set of triplets given by an adversary after $i$ triplets are given.  Under these assumptions, we state the following theorem:

\begin{theorem}
\label{thm:numTriplets}
In the worst-case, $\forall_{i=1,...,|\mathcal{T}^{total}|} : \mathcal{T}^{trans}_i \setminus \mathcal{T}_i = \emptyset$
\end{theorem}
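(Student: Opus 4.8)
The plan is to exhibit one adversarial choice of $\mathcal{T}^{total}$ together with one order in which its triplets are revealed, and to show that for this choice every prefix $\mathcal{T}_i$ is already transitively closed; since $\mathcal{T}^{trans}_i = \mathcal{T}_i^{\infty}\setminus\mathcal{T}_i$, closedness gives $\mathcal{T}^{trans}_i=\emptyset$, hence also $\mathcal{T}^{trans}_i\setminus\mathcal{T}_i=\emptyset$. By Theorem~\ref{thm:totalNumTriplets} every non-conflicting $\mathcal{T}^{total}$ already has the right cardinality, so it suffices to produce one such set. I would take the one induced by an arbitrary strict total order $\prec$ on the $\binom{n}{2}$ pairwise distances: $(a,b,c)\in\mathcal{T}^{total}$ exactly when $a,b,c$ are distinct and $d_{\mathbf{K}}(x_a,x_b)\prec d_{\mathbf{K}}(x_a,x_c)$. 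Every triplet here is an inequality consistent with $\prec$, so any chain of triplets composes to an inequality consistent with $\prec$; hence $(a,b,c)$ and $(a,c,b)$ cannot both lie in the closure, i.e. this $\mathcal{T}^{total}$ is non-conflicting.

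Next I would recast triplets as directed edges. Rank the distances $1,\dots,\binom{n}{2}$ according to $\prec$, treat the ranks as nodes, and send a triplet $(a,b,c)$ to the edge from the node of $d_{\mathbf{K}}(x_a,x_b)$ to the node of $d_{\mathbf{K}}(x_a,x_c)$; every edge goes from a lower rank to a higher rank (its \emph{head}). In this language $\mathcal{T}^{\infty}$ is precisely the set of triplets $(a,b,c)$ whose target node $d_{\mathbf{K}}(x_a,x_c)$ is reachable from its source node $d_{\mathbf{K}}(x_a,x_b)$ along the edges of $\mathcal{T}$, so ``$\mathcal{T}_i$ is transitively closed'' just says that every reachable valid pair is already an edge of $\mathcal{T}_i$. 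The adversarial order I would use groups the triplets by $\mathrm{top}(a,b,c):=\mathrm{rank}\!\left(d_{\mathbf{K}}(x_a,x_c)\right)$, the rank of the larger of the two compared distances, and reveals the groups in \emph{decreasing} order of $\mathrm{top}$, breaking ties inside a group arbitrarily; write $t_1,t_2,\dots$ for the resulting sequence, so $\mathcal{T}_i=\{t_1,\dots,t_i\}$.

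The heart of the argument is the following invariant: if $\ell$ is the common $\mathrm{top}$-value of the group containing $t_i$, then $\mathcal{T}_i$ contains every triplet with $\mathrm{top}>\ell$, contains no triplet with $\mathrm{top}<\ell$, and consequently every edge of $\mathcal{T}_i$ has its head of rank $\ge\ell$. Granting this, suppose $(p,q,r)\in\mathcal{T}_i^{\infty}$, witnessed by a directed path in $\mathcal{T}_i$ from the source node $u$ to the target node $v$ of this triplet. Distinctness of $p,q,r$ excludes a length-$0$ path; a length-$1$ path is the edge of $(p,q,r)$ itself, which is therefore in $\mathcal{T}_i$; and along a path of length $\ge 2$ the penultimate node is the head of an edge, hence has rank $\ge\ell$, while ranks strictly increase along the path, so $\mathrm{rank}(v)>\ell$, i.e. $\mathrm{top}(p,q,r)>\ell$, and the invariant again puts $(p,q,r)$ in $\mathcal{T}_i$. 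Hence $\mathcal{T}_i^{\infty}=\mathcal{T}_i$ for every $i$, which is the claim.

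I expect the main obstacle to be conceptual rather than computational: one must fix precisely what ``transitive closure of a set of triplets'' means (triplets compare distances that share an index, so composing two triplets need not yield a triplet), confirm the reachability description of $\mathcal{T}^{\infty}$ above against the paper's running example $\{(a,b,c),(c,a,b)\}\Rightarrow\{(b,a,c)\}$, and then verify the invariant. The key point behind the invariant is that within a single group all edges share the same head and so never compose with one another, while the already-revealed earlier groups contribute only edges whose heads have strictly larger rank; both checks are routine once the edge picture is in place.
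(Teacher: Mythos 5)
Your argument is correct: the induced-by-a-total-order $\mathcal{T}^{total}$ is non-conflicting and complete, the reachability reading of the closure matches the paper's example, and the invariant does force any triplet inferable from $\mathcal{T}_i$ by a path of length at least two to have $\mathrm{top}$ strictly above the current group's level $\ell$ (the penultimate node has rank at least $\ell$ and the last edge strictly increases rank), hence to lie in an already fully revealed group; the one point worth making explicit is that the inferred pair shares the object $p$ and respects $\prec$, so it really is a triplet of $\mathcal{T}^{total}$ with larger $\mathrm{top}$, which your invariant then covers. Conceptually this is the same adversary as the paper's: in Alg.~\ref{alg:adversarialStrategy} each distance-vertex receives \emph{all} of its remaining comparison edges pointed inward before the next vertex is touched, which is exactly your ``reveal all edges with a given head, heads in decreasing order of the (implicit) distance ranking.'' The execution, however, is genuinely different. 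The paper builds the order recursively in $n$, proves Lems.~\ref{lemma:neverInfer3} and \ref{lemma:neverInfer2} via a vertex-elimination argument, and assembles them into Prop.~\ref{prop:neverInfer} by induction; you instead fix one global strict total order $\prec$ on the $\binom{n}{2}$ distances up front, define the revelation order non-recursively by decreasing $\mathrm{top}$, and dispatch every prefix with a single monotonicity invariant. Your route buys a shorter, more self-contained proof that also makes the non-conflicting nature of $\mathcal{T}^{total}$ explicit (it is consistent with $\prec$ by construction), whereas the paper's recursive formulation exposes the old-vertex/new-vertex structure that lets it reuse the $(n-1)$-object solution as a prefix; both establish the worst case claimed in Thm.~\ref{thm:numTriplets}.
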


Theorem \ref{thm:numTriplets} is proven in Sec. \ref{sec:appTheorem2} of the appendix.  This states that in the worst case, no unobtained triplet can inferred by transitive relationship among obtained triplets.  As a result, it may fall on the RCKL methods themselves to infer triplets.  Many RCKL methods attempt to do this by assuming the learned kernel $\mathbf{K}$ has low rank.  By limiting the rank of $\mathbf{K}$ to $r < n$, an RCKL method may effectively infer unobtained triplets by eliminating those that cannot be satisfied by a rank $r$ kernel.  For instance, assume an RCKL method attempts to learn a rank $r$ kernel from $\mathcal{T}$, and assume the triplets $(a,b,c)$ and $(a,c,b)$ are not in $\mathcal{T}$.  If the set $\mathcal{T} \cup (a,c,b)$ cannot be satisfied by a rank $r$ kernel, but $\mathcal{T} \cup (a,b,c)$ can, then the RCKL method can only learn a kernel in which $(a,b,c)$ is satisfied.  Let $\mathcal{T}^{rank-r}$ be the set of all unobtained, not otherwise inferred, triplets that are inferred when an RCKL method enforces $\mathrm{rank}(\mathbf{K}) \leq r$.  For adversarial choice of $\mathcal{T}^{total}$ we can state the following theorem: 

\begin{theorem}
\label{thm:noRank}
In the worst case,  $\forall_{t \in \mathcal{T}^{rank-r}} : t \notin \mathcal{T}^{total}$
\end{theorem}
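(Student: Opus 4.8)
The plan is to prove the statement by exhibiting a single worst-case instance: an adversarial choice of $\mathcal{T}^{total}$, together with an adversarial order in which its triplets are revealed, for which $\mathcal{T}^{rank-r}$ contains no member of $\mathcal{T}^{total}$. Recall from Theorem~\ref{thm:totalNumTriplets} that $\mathcal{T}^{total}$ records exactly one answer to each of the $\frac12(n^3-3n^2+2n)$ relative comparison questions, so for every triple of distinct indices it contains precisely one of $t=(a,b,c)$ and its reversal $\bar t=(a,c,b)$; hence ``$t\notin\mathcal{T}^{total}$'' is equivalent to ``$\bar t\in\mathcal{T}^{total}$'', and the theorem asserts that enforcing $\mathrm{rank}(\mathbf{K})\leq r$ never forces a \emph{correct} unobtained triplet. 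First I would fix the revealing order using Theorem~\ref{thm:numTriplets}, so that $\mathcal{T}^{trans}_i\setminus\mathcal{T}_i=\emptyset$ at every step; then $\mathcal{T}^{rank-r}$ consists exactly of the unobtained triplets pinned down by the rank bound alone, and it remains to control that set.

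For the construction I would let the source's similarity be induced by $n$ points in general position in $\mathbb{R}^{n-1}$, so that their squared-distance Gram matrix has rank $n-1$, and take $\mathcal{T}^{total}$ to be the triplet set they generate (non-conflicting and free of ties, by genericity). In the meaningful regime $r\leq n-2$, the first fact to check is that the order in which a generic $(n-1)$-dimensional configuration arranges its pairwise distances cannot be reproduced by any configuration lying in an $r$-dimensional subspace; consequently no PSD kernel of rank at most $r$ satisfies all of $\mathcal{T}^{total}$, and indeed some finite subset $S\subseteq\mathcal{T}^{total}$ already admits no rank-$\leq r$ realization. (The case $r\geq n-1$ is degenerate: since the constraints depend only on translation-invariant squared distances, $n$ points always embed in dimension $n-1$, so the rank bound constrains nothing and there is nothing to infer beyond $\mathcal{T}^{trans}$.)

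The main argument then divides on the obtained set $\mathcal{T}$ (or, if the conclusion is demanded at every stage, on each prefix $\mathcal{T}_i$). If $\mathcal{T}$ has no rank-$\leq r$ realization — which the adversary can force by letting $\mathcal{T}$ contain the obstruction $S$ — then neither does any superset $\mathcal{T}\cup\{t\}$, so by the definition of $\mathcal{T}^{rank-r}$, which requires $\mathcal{T}\cup\{(a,b,c)\}$ to be rank-$\leq r$ satisfiable, no triplet is inferred, $\mathcal{T}^{rank-r}=\emptyset$, and the claim holds vacuously. If instead one insists that the obtained triplets remain rank-$\leq r$ satisfiable (so the method returns a kernel at all), or one must argue at the stages preceding the collapse, then I would choose the revealing order so that the semialgebraic family of rank-$\leq r$ Gram matrices satisfying $\mathcal{T}$ is still ``fat'' — positive-dimensional, and not confined to a single triplet type — whence for every unobtained $t\in\mathcal{T}^{total}\setminus\mathcal{T}$ some kernel in that family realizes $\bar t$ rather than $t$; then no correct triplet is forced and $\mathcal{T}^{rank-r}\cap\mathcal{T}^{total}=\emptyset$ again.

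The hard part will be this second case: engineering a revealing order along which the accumulated $\mathcal{T}_i$ passes directly from ``loose'' (feasible, with a positive-dimensional and triplet-type-ambiguous solution set) to ``infeasible'', never lingering in an intermediate stage at which feasibility has tightened just enough to determine some new correct triplet. I would attack this with a dimension count on the variety of rank-$\leq r$ Gram matrices cut out by the strict inequalities of $\mathcal{T}_i$: for a generic configuration each newly revealed triplet either lowers this dimension by one while keeping the solution set open and reflection-ambiguous in the coordinate it constrains, or it empties the set; revealing the obstruction $S$ (or a minimal such subset) only after all other triplets makes this dichotomy hold uniformly, so the ``forcing'' regime is skipped entirely. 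Confirming that no accidental determination sneaks in before the collapse — that the generic configuration really leaves exactly the freedom the count predicts — is where the technical effort concentrates.
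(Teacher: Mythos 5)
Your plan has a genuine gap at precisely the point you flag yourself. Having pre-committed to a single geometric $\mathcal{T}^{total}$ (triplets of a generic configuration in $\mathbb{R}^{n-1}$) and a revealing order, everything rests on the claim that the order can be arranged so that at \emph{every} prefix the set of rank-$\leq r$ kernels consistent with the obtained triplets is either empty or still ambiguous about every unobtained triplet of $\mathcal{T}^{total}$. You only gesture at this with a dimension-count heuristic and explicitly defer its verification; nothing in the proposal rules out an intermediate stage at which the rank bound pins down some correct unobtained triplet, which is exactly what the theorem denies. Your Case 1 (infeasibility, hence $\mathcal{T}^{rank-r}=\emptyset$) only kicks in once the obstruction $S$ has been fully revealed, so for all earlier prefixes you are back in the unproven Case 2 and the vacuous reading does not rescue the argument. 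The subsidiary claim that a generic $(n-1)$-dimensional configuration yields a triplet set with no rank-$\leq(n-2)$ realization is likewise asserted rather than proved, and it is not obvious.

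The paper's proof avoids all of this machinery by letting the adversary choose $\mathcal{T}^{total}$ \emph{after} the inference: if $t=(a,b,c)\in\mathcal{T}^{rank-r}$ then by definition $t$ is not in $\mathcal{T}^{\infty}$, so $\mathcal{T}\cup\{(a,c,b)\}$ is non-conflicting; by Prop.~\ref{prop:embedding} any non-conflicting triplet set is realizable at rank $n-1 > r$, so it extends to a legitimate $\mathcal{T}^{total}$ containing $(a,c,b)$, and hence $t\notin\mathcal{T}^{total}$. No geometric construction, no engineered ordering, and no control of the feasible set along prefixes is needed; the only ingredients are the definition of $\mathcal{T}^{rank-r}$ (rank-inferred triplets lie outside the transitive closure) and the rank-$(n-1)$ embeddability of non-conflicting triplet sets, a fact you invoke only in passing. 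If you wish to keep your global-construction route, the prefix-by-prefix ambiguity property is the missing lemma you must actually prove; as written, the proposal does not establish the theorem.
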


Theorem \ref{thm:noRank} is proven in Sec. \ref{sec:appTheorem3} of the appendix.  This theorem shows that it could be the case that any triplet inferred by limiting the rank of $\mathbf{K}$ is not a triplet a source would give.  If a large portion of $\mathcal{T}^{total}$ cannot be obtained or correctly inferred, then much of the information needed for an RCKL method to learn a kernel that reflects how a source views the relationship among objects is simply not available.  The goal of this work is to use auxiliary information describing the objects to supplement obtained triplets in order to learn a kernel that can satisfy more unobtained triplets than traditional methods.  In the following section we propose a novel RCKL method that extends traditional RCKL methods to use auxiliary information.

 \section{Learning a Kernel with Auxiliary Information}
 \label{sec:learn}
In this section we introduce a generalized framework for traditional RCKL methods.  Then, we expand upon this to create two new frameworks:  One that combines auxiliary kernels to satisfy triplets, and another that is a hybrid of the previous two.

 \subsection{Traditional RCKL}
 Many RCKL methods can be generalized by the following optimization problem:

 \begin{equation}
   \begin{array}{rl}
     \displaystyle \min_{\mathbf{K}} & \displaystyle E(\mathbf{K}, \mathcal{T}) + \lambda\mathrm{trace}(\mathbf{K})\\ [5 pt]
     \mathrm{s.t.} & \mathbf{K} \succeq 0,\
   \end{array}
   \label{eq:RCKL}
 \end{equation}
 
 The first term, $E(\mathbf{K},\mathcal{T})$, is a function of the error the objective incurs for $\mathbf{K}$ not satisfying triplets in $\mathcal{T}$.  The second term regularizes $\mathbf{K}$ by its trace.  Here, the trace is used as a convex approximation of the non-convex rank function.  The rank of $\mathbf{K}$ directly reflects the dimensionality of the embedding of the objects in $\mathcal{H}_{\mathbf{K}}$.  A low setting of the hyperparameter $\lambda$ favors a more accurate embedding, while a high value prefers a lower rank kernel.  The PSD constraint ensures that $\mathbf{K}$ is a valid kernel matrix, and makes \eqref{eq:RCKL} a semidefinite program (SDP) over $n^2$ variables.  For the remainder of this paper we will refer to \eqref{eq:RCKL} as Traditional Relative Comparison Kernel Learning (RCKL-T). 




 \subsection{RCKL via Conic Combination}
In general, if there are few triplets in $\mathcal{T}$ relative to $n$, there are many different RCKL solutions.  Without using information regarding how the objects relate other than $\mathcal{T}$, RCKL-T methods may not be able to prefer solutions that generalize well to the many unobtained triplets.  However, objects can often be described by features drawn from data.  From these features, $A$ auxiliary kernels $\mathbf{K}_1,...,\mathbf{K}_A \in \mathbb{R}^{n \times n}$ can be constructed using standard kernel functions to model the relationship among objects.  If one or more auxiliary kernels satisfy many triplets in $\mathcal{T}$, they may represent factors that influence how some of the unobtained triplets would have been answered.  For instance, if a user considers characteristics such as color and size to be important when comparing clothing items, then kernels built from color and size may model a trend in how the user answers triplets over clothing items.  If these kernels do represent a trend, then they could not only satisfy a portion of triplets in $\mathcal{T}$, but also a portion of the unobtained triplets.  We wish to identify which of the given auxiliary kernels model trends in given triplets and combine them to satisfy triplets in $\mathcal{T}$.  An approach popularized by multiple kernel learning methods is to combine kernels by a weighted sum:

 \begin{equation}
   \displaystyle \mathbf{K}' = \sum_{a=1}^A \mu_a\mathbf{K}_a\ \ \ \ \mathbfup{\mu} \in \mathbb{R}^{A}_{+}
 \label{eq:mklcomb}
 \end{equation}

$\mathbf{K}'$ is a conic combination of PSD kernels, so itself is a PSD kernel \cite{scholopf2002learning}.  $\mathbf{K}'$ induces the mapping $\mathbfup{\Phi}_{\mathbf{K}'} : \mathbb{X} \rightarrow \mathbb{R}^{D}$ \cite{gonen2011multiple}:
 
 \begin{equation}
   \displaystyle \mathbfup{\Phi}_{\mathbf{K}'}(x_i) = \lbrack \sqrt{\mu_1}\mathbfup{\Phi}_{1}(x_i),..., \sqrt{\mu_A}\mathbfup{\Phi}_{A}(x_i) \rbrack
 \label{eq:mklmap}
 \end{equation}
 
 Here $\mathbfup{\Phi}_{j} : \mathbb{X} \rightarrow \mathbb{R}^{d_j}$ is a mapping from an object into the RKHS defined by $\mathbf{K}_j$, and $D = \sum_{a=1}^{A} d_a$.  In short, \eqref{eq:mklcomb} induces a mapping of the objects into a feature space defined as the weighted concatenation of the individual kernels' feature spaces.  Consider, then, the following optimization: 
 
 \begin{equation}
   \begin{array}{rl}
     \displaystyle \min_{\mathbfup{\mu}} & \displaystyle E(\mathbf{K}', \mathcal{T}) + \lambda\|\mathbfup{\mu}\|_1\\ [5 pt]
     \mathrm{s.t.} & \mathbfup{\mu} \geq 0
   \end{array}
   \label{eq:RCKL-MKL}
 \end{equation}
 
 By learning the weight vector $\mathbfup{\mu}$, \eqref{eq:RCKL-MKL} scales the individual concatenated feature spaces to emphasize those that reflect $\mathcal{T}$ well, and reduce the influence of those that do not. Because of its relationship to multiple kernel learning, we call this formulation RCKL-MKL.

Since the auxiliary kernels are fixed, regularizing them by their traces has no effect on their rank nor the rank of $\mathbf{K}'$.  Instead, we choose to regularize $\mathbfup{\mu}$ by its $\ell_1$-norm, a technique first made popular for its use in regression \cite{tibshirani1996regression}.  For a proper setting of $\lambda$, this has the effect of eliminating the contribution of kernels that do not help in reducing the error by forcing their corresponding weights to be exactly zero.  Note that RCKL-MKL does not learn the elements of a kernel directly, and as a result is a linear program over $A$ variables.
 

By limiting the optimization to only a conic combination of the predefined auxiliary kernels, RCKL-MKL does not necessarily produce a kernel that satisfies any triplets in $\mathcal{T}$.  To capture the potential generalization power of using auxiliary information while retaining the ability to satisfy triplets in $\mathcal{T}$, we propose to learn a combination of the auxiliary kernels and $\mathbf{K}_0$, a kernel similar to the one in RCKL-T whose elements are learned directly.  By doing this, we force RCKL-T to prefer solutions more similar to the auxiliary kernels, which could satisfy unobtained triplets.  We call this hybrid approach Relative Comparison Kernel Learning with Auxiliary Kernels (RCKL-AK).

 \subsection{RCKL-AK}
 RCKL-AK learns the following kernel combination:
 
 \begin{equation}
   \displaystyle \mathbf{K}'' = \mathbf{K}_0 + \sum_{a=1}^A \mu_a\mathbf{K}_a\ \ \ \ \mathbfup{\mu} \in \mathbb{R}^{A}_{+},\ \mathbf{K}_0 \succeq 0
 \label{eq:akcomb}
 \end{equation}
 
 \eqref{eq:akcomb} is a conic combination of kernel matrices that induces the  mapping:
 
 \begin{equation}
   \displaystyle \mathbfup{\Phi}_{\mathbf{K}''}(x_i) = \lbrack \mathbfup{\Phi}_{0}(x_i), \sqrt{\mu_1}\mathbfup{\Phi}_{1}(x_i),..., \sqrt{\mu_A}\mathbfup{\Phi}_{A}(x_i) \rbrack
 \label{eq:rcklmap}
 \end{equation}
 
 The intuition behind this combination is that auxiliary kernels that satisfy many triplets are emphasized by weighing them more, and $\mathbf{K}_0$, which is learned directly, can satisfy the triplets that cannot be satisfied by the conic combination of the auxiliary kernels.  Consider, again, the example of a person comparing clothing items on an online store.  She may compare clothes by characteristics such as color, size, and material, which are features that can be extracted and used to build the auxiliary kernels.  However, other factors may influence how she compares clothes, such as designer or pattern, which may be omitted from the auxiliary kernels.  In addition, she may have a personal sense of style that is impossible to be gained from features alone.  $\mathbf{K}_0$, and thus features induced by the mapping $\mathbfup{\Phi}_{0}$, is learned to model factors she uses to compare clothes that are omitted from the auxiliary kernels or cannot be modeled by extracted features.  Using \eqref{eq:akcomb}, we propose the following optimization:
 
 \begin{equation}
   \begin{array}{rl}
     \displaystyle \min_{\mathbf{K}_0, \mathbfup{\mu}} & \displaystyle E(\mathbf{K}'', \mathcal{T}) + \lambda_1\mathrm{trace}(\mathbf{K}_0) + \lambda_2\|\mathbfup{\mu}\|_1\\ [5 pt]
     \mathrm{s.t.} & \mathbf{K}_0 \succeq 0,\ \mathbfup{\mu} \geq 0
   \end{array}
   \label{eq:RCKL-AK}
 \end{equation}
 
 This objective has two regularization terms: trace regulation on $\mathbf{K}_0$, and $\ell_1$-norm regularization on $\mathbfup{\mu}$.  Increasing $\lambda_1$ limits the expressiveness of $\mathbf{K}_0$ by reducing its rank, while increasing $\lambda_2$ reduces the influence of the auxiliary kernels by forcing the elements of $\mathbfup{\mu}$ towards zero.  Thus, $\lambda_1$ and $\lambda_2$ represent a trade-off between finding a kernel that is more influenced by $\mathbf{K}_0$ and one more influenced by the auxiliary kernels.  Like RCKL-T, RCKL-AK is an SDP, but with $n^2 + A$ optimization variables.  For practical $A$, RCKL-AK can be solved with minimal additional computational overhead to RCKL-T.


 One desirable property of \eqref{eq:RCKL-AK} is that under certain conditions, it is a convex optimization problem:
 \begin{proposition}
   If $E$ is a convex function in both $\mathbf{K}_0$ and $\mathbfup{\mu}$, then \eqref{eq:RCKL-AK} is a convex optimization problem.
 \label{prop:convexRCKL-AK}
 \end{proposition}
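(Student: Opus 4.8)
The plan is to verify the two defining requirements of a convex program separately: that the constraint set is convex, and that the objective is a convex function on it. Both reduce to standard convexity-preserving operations once the affine structure of \eqref{eq:akcomb} is noted.

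For the feasible set, the constraints $\mathbf{K}_0 \succeq 0$ and $\mathbfup{\mu} \geq 0$ describe, respectively, the positive semidefinite cone in $\mathbb{R}^{n\times n}$ and the nonnegative orthant in $\mathbb{R}^A$. The PSD cone is convex, since $\alpha\mathbf{A} + (1-\alpha)\mathbf{B} \succeq 0$ whenever $\mathbf{A},\mathbf{B}\succeq 0$ and $\alpha\in[0,1]$; the orthant is an intersection of half-spaces; and the Cartesian product of convex sets is convex. Hence the feasible region of \eqref{eq:RCKL-AK} is convex.

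For the objective, I would treat its three summands in turn. The crucial observation is that, since the auxiliary kernels $\mathbf{K}_1,\dots,\mathbf{K}_A$ are fixed, the combined kernel $\mathbf{K}'' = \mathbf{K}_0 + \sum_{a=1}^{A}\mu_a\mathbf{K}_a$ in \eqref{eq:akcomb} is an affine — indeed linear — function of the joint variable $(\mathbf{K}_0,\mathbfup{\mu})$. Consequently, under the hypothesis that $E(\cdot,\mathcal{T})$ is convex, the map $(\mathbf{K}_0,\mathbfup{\mu}) \mapsto E(\mathbf{K}'',\mathcal{T})$ is convex, because convexity is preserved under precomposition with an affine map; this is precisely where the assumption on $E$ enters. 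The term $\mathrm{trace}(\mathbf{K}_0)$ is linear in $\mathbf{K}_0$ and independent of $\mathbfup{\mu}$, hence convex in $(\mathbf{K}_0,\mathbfup{\mu})$; and $\|\mathbfup{\mu}\|_1$ is a norm on $\mathbb{R}^A$, hence convex, and independent of $\mathbf{K}_0$. Taking the regularization parameters $\lambda_1,\lambda_2 \geq 0$, the objective is a nonnegative-weighted sum of convex functions, and is therefore convex.

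Combining the two parts, \eqref{eq:RCKL-AK} minimizes a convex function over a convex set, which is by definition a convex optimization problem. I do not expect a genuine obstacle here; the one step worth spelling out is the affine-composition argument for the error term — one must observe that the auxiliary kernels enter \eqref{eq:akcomb} only through the scalars $\mu_a$, so that $\mathbf{K}''$ depends affinely on the optimization variables and the assumed convexity of $E$ transfers to the composite objective. The remainder is bookkeeping with the facts that linear functions and norms are convex and that nonnegative combinations of convex functions are convex.
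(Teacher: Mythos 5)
Your proof is correct and takes essentially the same approach as the paper's: both check that the feasible set (the PSD cone for $\mathbf{K}_0$ and the nonnegative orthant for $\mathbfup{\mu}$) is convex and decompose the objective into the error term, the trace term, and the $\ell_1$ term, each convex, combined with nonnegative weights. The only (harmless) difference is that you justify convexity of the error term through the affine dependence of $\mathbf{K}''$ on $(\mathbf{K}_0, \mathbfup{\mu})$, whereas the paper simply invokes the hypothesis that $E$ is convex in both variables directly.
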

Proposition \ref{prop:convexRCKL-AK} is proven in Sec. \ref{sec:appProposition1} of the appendix.  While Prop. \ref{prop:convexRCKL-AK} may seem simple, it allows us to leverage traditional RCKL methods that contain error functions that are convex in $\mathbf{K}_0$ and $\mathbfup{\mu}$ in order to solve \eqref{eq:RCKL-AK} using convex optimization techniques.  Two such error functions are discussed in the following subsections.

 \subsubsection{STE-AK}
 \label{sec:STE-AK}
 Stochastic Triplet Embedding (STE) \cite{van2012stochastic} proposes the following probability that a triplet is satisfied:
 
 \begin{equation*}
   p_{abc}^{\mathbf{K}} = \frac{\mathrm{exp}(-d_{\mathbf{K}}(\mathit{x}_a, \mathit{x}_b))}{\mathrm{exp}(-d_{\mathbf{K}}(\mathit{x}_a, \mathit{x}_b)) + \mathrm{exp}(-d_{\mathbf{K}}(\mathit{x}_a, \mathit{x}_c))}
   \label{eq:STEProb}
 \end{equation*}
 
 If this probability is high, then $\mathit{x}_a$ is closer to $\mathit{x}_b$ than it is to $\mathit{x}_c$.  As such, we minimize the negative sum of the log-probabilities over all triplets.
 
 \begin{equation}
   \label{eq:STE-AK}
   \displaystyle E_{\mathrm{STE}}\left(\mathbf{K}'',\mathcal{T}\right) = -\hspace{-10 pt}\sum_{(a,b,c) \in \mathcal{T}}\hspace{-10 pt}\log(p_{abc}^{\mathbf{K}''})
 \end{equation}
 %
With this error function we call our method STE-AK and can state the following proposition:

 \begin{proposition}
   \eqref{eq:STE-AK} is convex in both $\mathbf{K}_0$ and $\mathbfup{\mu}$
   \label{lemma:convexESTE}
 \end{proposition}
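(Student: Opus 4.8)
The plan is to reduce the claim to two standard facts from convex analysis: the two-argument log-sum-exp function $\phi(u,v) = \log(e^{u}+e^{v})$ is convex on $\mathbb{R}^{2}$, and the precomposition of a convex function with an affine map is convex. First I would observe that the squared distance $d_{\mathbf{K}}(x_a,x_b) = \mathbf{K}_{aa} + \mathbf{K}_{bb} - 2\mathbf{K}_{ab}$ is a \emph{linear} functional of the entries of $\mathbf{K}$, and that by \eqref{eq:akcomb} the combined kernel $\mathbf{K}''$ is linear in the pair $(\mathbf{K}_0, \mathbfup{\mu})$, since the auxiliary kernels $\mathbf{K}_1,\dots,\mathbf{K}_A$ are fixed. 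Composing these, for each fixed triplet $(a,b,c)$ write $d_{ab}$ for $d_{\mathbf{K}''}(x_a,x_b)$ and $d_{ac}$ for $d_{\mathbf{K}''}(x_a,x_c)$; each is an \emph{affine} function of $(\mathbf{K}_0, \mathbfup{\mu})$, namely $d_{ab} = d_{\mathbf{K}_0}(x_a,x_b) + \sum_{k=1}^{A}\mu_k\, d_{\mathbf{K}_k}(x_a,x_b)$, and likewise for $d_{ac}$.

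Next I would rewrite a single triplet's contribution in an explicitly additive form. Expanding the logarithm,
\[
-\log p_{abc}^{\mathbf{K}''} \;=\; d_{ab} \;+\; \log\bigl(e^{-d_{ab}} + e^{-d_{ac}}\bigr) \;=\; d_{ab} \;+\; \phi(-d_{ab},\, -d_{ac}).
\]
The first summand is affine in $(\mathbf{K}_0, \mathbfup{\mu})$, hence convex. The second summand is $\phi$ precomposed with the affine map $(\mathbf{K}_0, \mathbfup{\mu}) \mapsto (-d_{ab}, -d_{ac})$, hence convex in $(\mathbf{K}_0, \mathbfup{\mu})$. A sum of convex functions is convex, so each term $-\log p_{abc}^{\mathbf{K}''}$ is convex, and summing \eqref{eq:STE-AK} over all $(a,b,c)\in\mathcal{T}$ preserves convexity; this gives convexity of $E_{\mathrm{STE}}(\mathbf{K}'',\mathcal{T})$ jointly in $(\mathbf{K}_0,\mathbfup{\mu})$ and, in particular, in each of $\mathbf{K}_0$ and $\mathbfup{\mu}$ separately.

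There is no substantive obstacle here; the one point worth stating carefully is that, after substituting \eqref{eq:akcomb}, the dependence of $\mathbf{K}''$ --- and hence of $d_{ab}$ and $d_{ac}$ --- on the joint variable $(\mathbf{K}_0,\mathbfup{\mu})$ is genuinely affine, the products $\mu_k\mathbf{K}_k$ being affine in this variable precisely because each $\mathbf{K}_k$ is a constant matrix, so the affine-precomposition rule applies and yields joint convexity rather than merely blockwise convexity. It is also worth noting that the argument establishes convexity of the error term on the whole ambient space; the feasibility constraints $\mathbf{K}_0 \succeq 0$ and $\mathbfup{\mu}\geq 0$ are themselves convex, and it is their combination with this fact that yields Proposition \ref{prop:convexRCKL-AK}.
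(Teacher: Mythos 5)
Your proof is correct, and like the paper's it ultimately rests on the observation that after substituting \eqref{eq:akcomb} the distances $d_{\mathbf{K}''}(x_a,x_b)$ are affine (indeed linear) in the pair $(\mathbf{K}_0,\mathbfup{\mu})$; the convexity machinery you invoke, however, is different. The paper rewrites each term of \eqref{eq:STE-AK} as $\log\bigl(1+\exp(D^{abc}_{\mathbf{K}''})\bigr)$ and argues through logarithmic convexity: $1+\exp(D^{abc}_{\mathbf{K}''})$ is a sum of log-convex functions (Definition \ref{def:logconvex} and Lem.~\ref{lemma:sumlog}), with the affine dependence of $D^{abc}_{\mathbf{K}''}$ on the variables handled via Lems.~\ref{lemma:sumConvex} and \ref{lemma:affineMap}. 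You instead split the term as $d_{ab}+\phi(-d_{ab},-d_{ac})$ with $\phi$ the two-argument log-sum-exp, and use the standard facts that log-sum-exp is convex and that convexity is preserved under precomposition with affine maps; note that the two decompositions are the same expression in disguise, since $d_{ab}+\log\bigl(e^{-d_{ab}}+e^{-d_{ac}}\bigr)=\log\bigl(1+e^{d_{ab}-d_{ac}}\bigr)$. What your route buys is brevity (once log-sum-exp convexity is granted) and an explicit statement of \emph{joint} convexity in $(\mathbf{K}_0,\mathbfup{\mu})$, which is what Prop.~\ref{prop:convexRCKL-AK} actually requires; what the paper's route buys is self-containment, deriving everything from the elementary lemmas it enumerates (sums of convex functions, affine maps, sums of log-convex functions) rather than citing log-sum-exp convexity as a black box. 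Either argument is acceptable.
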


Proposition \ref{lemma:convexESTE} is proven in Sec. \ref{sec:prop2} of the appendix.  By Props. \ref{prop:convexRCKL-AK} and \ref{lemma:convexESTE}, STE-AK is a convex optimization problem.

 \subsubsection{GNMDS-AK}
 \label{sec:GNMDS-AK}
 Another potential error function is one motivated by Generalized Non-Metric Multidimensional Scaling (GNMDS) \cite{agarwal2007generalized} which uses hinge loss:
 
 \begin{equation}
     \displaystyle E_{\mathrm{GNMDS}}\left(\mathbf{K}'', \mathcal{T}\right) = 
     \hspace{-10 pt}\displaystyle \sum_{(a,b,c) \in \mathcal{T}} \hspace{-10 pt}\max(0, d_{\mathbf{K}''}(\mathit{x}_a, \mathit{x}_b) - d_{\mathbf{K}''}(\mathit{x}_a, \mathit{x}_c) + 1)
\label{eq:GNMDS-AK-Error}
\end{equation}



We call our method with this error function GNMDS-AK.  The hinge loss ensures that only triplets that are unsatisfied by a margin of one increase the objective.  GNMDS-AK is also a convex optimization problem, due to Prop. \ref{prop:convexRCKL-AK} and the following:
\vspace{6pt}
\begin{proposition}
  \label{lemma:convexEGNMDS}
  \eqref{eq:GNMDS-AK-Error} is convex in both $\mathbf{K}_0$ and $\mathbfup{\mu}$.
\end{proposition}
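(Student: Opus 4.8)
The plan is to exploit the fact that the argument of each hinge term is an affine function of the pair $(\mathbf{K}_0, \mathbfup{\mu})$, and that convexity is preserved under composition with an affine map and under summation.

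First I would observe that for any indices $i,j$, the entry $\mathbf{K}''_{ij} = (\mathbf{K}_0)_{ij} + \sum_{a=1}^A \mu_a (\mathbf{K}_a)_{ij}$ is an affine --- indeed linear --- function of $(\mathbf{K}_0, \mathbfup{\mu})$, since the auxiliary matrices $\mathbf{K}_1,\dots,\mathbf{K}_A$ are fixed. Consequently $d_{\mathbf{K}''}(x_i, x_j) = \mathbf{K}''_{ii} + \mathbf{K}''_{jj} - 2\mathbf{K}''_{ij}$, being a fixed linear combination of entries of $\mathbf{K}''$, is also linear in $(\mathbf{K}_0, \mathbfup{\mu})$.

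Next, for a fixed triplet $(a,b,c) \in \mathcal{T}$, the quantity $g_{abc}(\mathbf{K}_0, \mathbfup{\mu}) = d_{\mathbf{K}''}(x_a, x_b) - d_{\mathbf{K}''}(x_a, x_c) + 1$ is a difference of two such linear functions plus a constant, hence affine in $(\mathbf{K}_0, \mathbfup{\mu})$. The scalar map $t \mapsto \max(0, t)$ is convex on $\mathbb{R}$, and composing a convex function with an affine map yields a convex function, so $\max(0,\, g_{abc}(\mathbf{K}_0, \mathbfup{\mu}))$ is jointly convex in $(\mathbf{K}_0, \mathbfup{\mu})$. Finally, $E_{\mathrm{GNMDS}}(\mathbf{K}'', \mathcal{T})$ is a finite sum of these terms over $(a,b,c) \in \mathcal{T}$, and a sum of convex functions is convex, which establishes the claim.

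I do not expect a genuine obstacle here; the only point requiring care is that the asserted convexity is \emph{joint} in $(\mathbf{K}_0, \mathbfup{\mu})$ --- exactly what is needed to invoke Proposition \ref{prop:convexRCKL-AK} --- and this is precisely what the ``affine in both arguments simultaneously'' observation delivers: because the $\mathbf{K}_a$ are constant matrices, the weights $\mu_a$ enter only linearly, so no bilinear cross terms between $\mathbf{K}_0$ and $\mathbfup{\mu}$ arise. One could instead argue entry-wise without appealing to the RKHS interpretation, but the affine-composition route is the cleanest.
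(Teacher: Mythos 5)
Your proof is correct and follows essentially the same route as the paper: both arguments rest on the observation that $d_{\mathbf{K}''}(x_a,x_b) - d_{\mathbf{K}''}(x_a,x_c) + 1$ is affine in $(\mathbf{K}_0, \mathbfup{\mu})$ because the auxiliary kernels are fixed, and then invoke convexity-preserving operations (the paper phrases the hinge step via the pointwise-maximum-of-convex-functions lemma applied to $0$ and the affine term, while you phrase it as composing $\max(0,\cdot)$ with an affine map, which is the same fact) together with closure under summation. Your explicit remark that the convexity is \emph{joint} in $(\mathbf{K}_0,\mathbfup{\mu})$, with no bilinear cross terms, is a welcome point of care that the paper leaves somewhat implicit.
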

%


Proposition \ref{lemma:convexEGNMDS} is proven in Sec. \ref{sec:appProposition3} of the appendix.  For a more rigorous comparison of RCKL methods see \cite{van2012stochastic}.  We propose to solve both STE-AK and GNMDS-AK via projected gradient descent methods; both of which are outlined in Sec. \ref{sec:appAKA} of the appendix.

\section{Related Work}
\label{sec:related}
RCKL-AK can be viewed as a combination of \emph{multiple kernel learning} (MKL) and \emph{non-metric multidimensional scaling} (NMDS).  Learning a non-negative sum of kernels, as in \eqref{eq:mklcomb}, appears often in MKL literature, which is focused on finding efficient methods for learning a combination of predefined kernels for a learning task.  The most widely studied problem in MKL has been Support Vector Classification \cite{lanckriet2004learning,rakotomamonjy2008simplemkl,varma2009more,Jain12}.  To our knowledge there has been no application of MKL techniques to the task of learning a kernel from relative comparisons.

The RCKL problem posed in Sec. ~\ref{sec:pre} is a special case of the NMDS problem first formalized in \cite{agarwal2007generalized}, which in turn is a generalization of the Shepard-Kruskal NMDS problem \cite{shepard1962analysis}.  GNMDS, STE, and Crowd Kernel Learning (CKL) \cite{tamuz2011adaptively} are all methods that can be applied to the RCKL problem.  However, none of these methods consider inputs beyond relative comparisons.  Our work creates a novel RCKL method that uses ideas popularized in MKL research to incorporate side information into the learning problem.

Relative comparisons have also been considered in \emph{metric learning} \cite{schultz2004learning,davis2007information,huang2011generalized}.  In metric learning the focus is on learning a distance metric over objects that can be applied to out-of-sample \emph{objects}.  This work focuses specifically on finding a kernel over given objects that generalizes well to out-of-sample (unobtained) \emph{triplets}.  In this way, the goal of metric learning methods is somewhat different than the one in this work.  Two recent works propose methods to learn a Mahalanobis distance metric with multiple kernels: Metric Learning with Multiple Kernels (ML-MKL) \cite{wang2011metric} and Multiple Kernel Partial Order Embedding (MKPOE) \cite{mcfee2011learning}; the latter focusing exclusively on relative distance constraints similar to those in this work.  The kernel learned by RCKL-AK induces a mapping that is fundamentally different than those learned by these metric learning techniques.  Consider the mapping induced by one of the metric learning methods proposed in both \cite{mcfee2011learning} (Section 4.2) and \cite{wang2011metric} (Equation 6):

\begin{equation}
\mathbfup{\Phi}_{\mathbfup{\mu}, \mathbfup{\Omega}}(x) = \mathbfup{\Omega}\left[\sqrt{\mu_1}\mathbfup{\Phi}_1(x),...,\sqrt{\mu_A}\mathbfup{\Phi}_A(x)\right]
\label{eq:metricMap}
\end{equation} 

The derivation of this mapping can be found in Sec. \ref{sec:appDerivation} of the appendix. Here $\mathbfup{\Omega} \in \mathbb{R}^{m\mathrm{x}D}$ produces a new feature space by transforming the feature spaces induced by the auxiliary kernels.  Without $\mathbfup{\Omega}$, \eqref{eq:metricMap} learns a mapping similar to \eqref{eq:mklmap}.  The matrix $\mathbfup{\Omega}$ plays a role similar to the one $\mathbf{K}_0$ plays in RCKL-AK: it is learned to satisfy triplets that the auxiliary kernels alone cannot.  Instead of linearly transforming the auxiliary kernel feature spaces,  RCKL-AK implicitly learns new features that are concatenated onto the concatenated auxiliary kernel feature spaces (see \eqref{eq:rcklmap}).
 
In both works, the authors propose non-convex optimizations to solve for their metrics, and, in addition, different convex relaxations.  A critical issue with the convex solutions is that they employ SDPs over  $n^2*A$ (MKPOE-Full) and $n^2*A^2$ (NR-ML-MKL)  optimization variables, respectively.  For moderately sized problems these methods are impractical.  To resolve this issue, \cite{mcfee2011learning} propose a method that imposes further diagonal structure on the learned metric, reducing the number of optimization variables to $n*A$ (MKPOE-Diag), but in the process, greatly limit the structure of the metric.  RCKL-AK is a convex SDP with $n^2 + A$ optimization variables that does not impose strict structure on the learned kernel.  Unfortunately, by learning the unique kernel $\mathbf{K}_0$ directly and not the mapping $\mathbfup{\Phi}_0$ or a generating function of $\mathbf{K}''$, our method cannot be applied to out-of-sample objects.  Data analysis that does not require the addition of out-of-sample objects can be used over that kernel.  There are many unsupervised and semi-supervised techniques that fit this use case.

\section{Experiments}
\label{sec:experiments}
In order to show that RCKL-AK can learn kernels from few triplets that generalize well to unobtained triplets, we perform two experiments: one using synthetic data, and one using real-world data.  In both experiments, we compare STE and GNMDS variants of RCKL-T, RCKL-MKL, and RCKL-AK, as well as non-convex and convex variants of MKPOE.  For the MKPOE methods, we consider a triplet $\left(a,b,c\right)$ to be satisfied if $d_{\mathbf{M}}\left(x_a,x_b\right) < d_{\mathbf{M}}\left(x_a,x_c\right)$, where $d_{\mathbf{M}}$ is the distance function defined by the metric.   The STE and GNMDS implementations used are from \cite{van2012stochastic}, which are made publicly available on the authors' websites.  The MKL and AK versions were extended from these implementations.  MKPOE implementations were provided to us by their original authors.  All auxiliary kernels are normalized to unit trace, and all hyperparameters were validated via line or grid search using validation sets.

\subsection{Synthetic Data}
To generate synthetic data we begun by randomly generating 100 points in seven, independent, two-dimensional feature spaces where both dimensions were over the interval $[0,1]$.  Then, we created seven linear kernels, $\mathbf{K}_0,...,\mathbf{K}_6$ from these seven spaces.  We combined four of the seven kernels:
\begin{equation}
\mathbf{K}^{*} = \frac{1}{2}\mathbf{K}_0 + \frac{1}{4}\mathbf{K}_1 + \frac{1}{6}\mathbf{K}_2 + \frac{1}{12}\mathbf{K}_3 
\label{eq:gtkernel}
\end{equation}
We then used $\mathbf{K}^{*}$ as the ground truth to answer all possible, non-redundant triplets.  Following the experimental setup in \cite{tamuz2011adaptively}, we divided these triplets into 100 triplet ``rounds''.  A round is a set of triplets where each object appears once as the head $a$ being compared to randomly chosen objects $b$ and $c$.  From the pool of rounds, 20 were chosen to be the training set, 10 were chosen to be the validation set, and the remaining rounds were the test set.  This was repeated ten times to create ten different trials.
 
Next, we took all seven feature spaces and perturbed each point with randomly generated Gaussian noise.  From these new spaces we created seven new linear kernels $\hat{\mathbf{K}}_0,...,\hat{\mathbf{K}}_6$, of which $\hat{\mathbf{K}}_1,...,\hat{\mathbf{K}}_6$ were used as the input auxiliary kernels in the experiment.  Here, $\hat{\mathbf{K}}_1,...,\hat{\mathbf{K}}_3$ are kernels that represent attributes that influence how the ground truth makes comparisons between objects.  $\hat{\mathbf{K}}_4,...,\hat{\mathbf{K}}_6$ contain information that is not considered when making comparisons, and $\mathbf{K}_0$ represents intuition about the objects that was not or cannot be input as an auxiliary kernel.  


We wish to evaluate the performance of each method as the number of triplets increases.  With more triplets each method should be able to build models that satisfy more unobtained triplets.  To show this we performed the following experiment.  For each trial, the 20 training rounds and 10 validation rounds are divided into ten subsets, each containing two training rounds and one validation round.  Starting with one of the subsets, each model is trained, setting the hyperparameters through cross-validation on the validation set, and evaluated on the test set.  Then, another subset is added to the training and validation sets.  We repeat this process until all ten subsets are included.  We evaluate the methods by the total number of unsatisfied triplets in the test set divided by the total number of triplets in the test set (test error).  Here, the test set represents unobtained triplets.  For all of the following figures, error bars represent a 95\% confidence interval.

{\bf Discussion:} Figure \ref{fig:Toy} shows the mean test error over the ten trials  as a function of the number of triplets in the training set.  Both RCKL-MKL methods improve performance initially, but achieve their approximate peak performance early and fail to improve as triplets are added.  This supports the claim that RCKL-MKL is limited by only being able to combine auxiliary kernels through a conic combination.  Both RCKL-T methods perform much worse than either of the methods that use auxiliary kernels.  Without the side information provided by the auxiliary kernels, RCKL-T cannot generalize to test triplets with few training triplets.

\begin{figure}
  \centering
  \null\hfill
  \begin{subfigure}{0.48\columnwidth}
    \centering
    \includegraphics[width=\columnwidth]{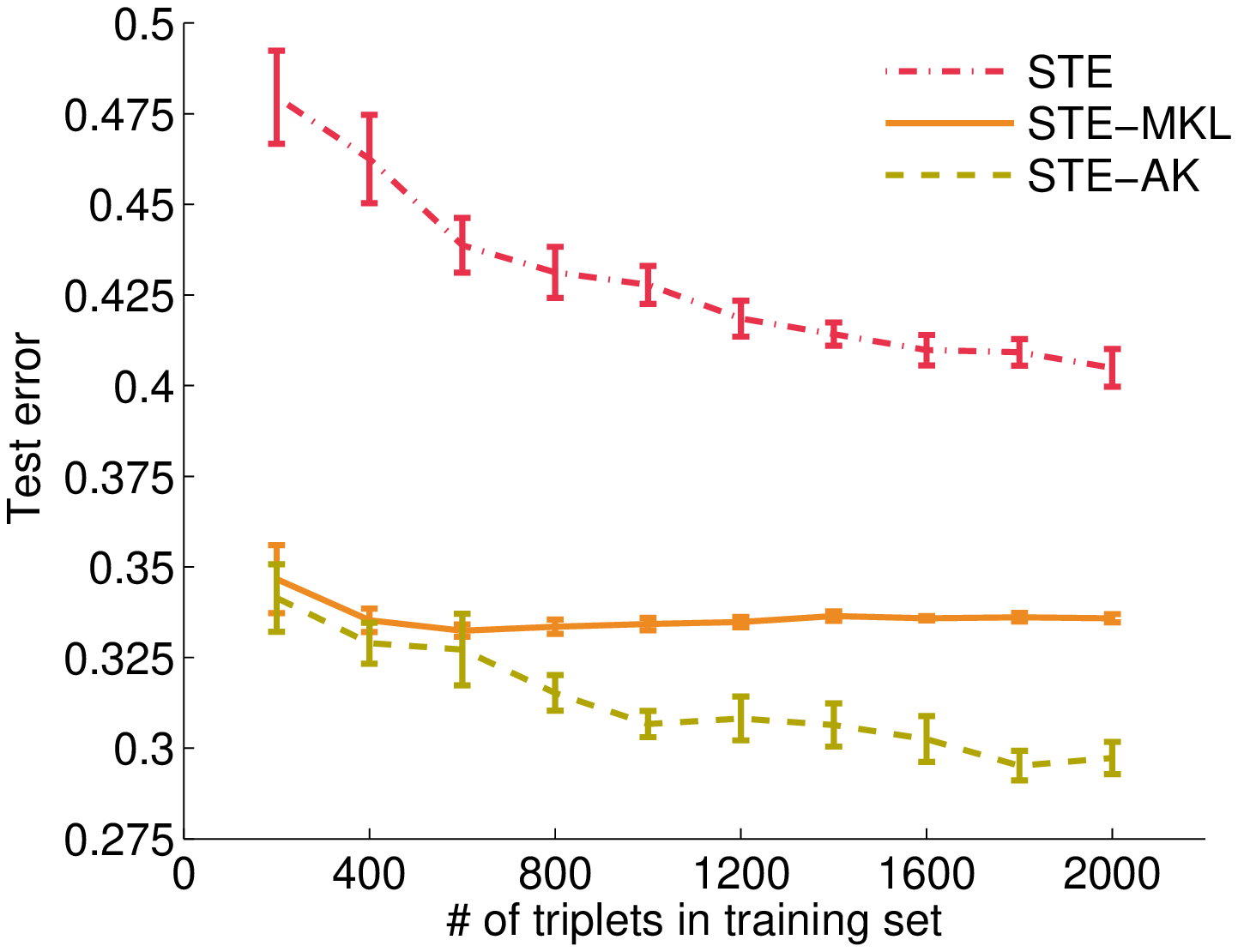}
    \label{fig:STEerrorToy}
  \end{subfigure}%
  \hfill %
  \begin{subfigure}{0.48\columnwidth}
    \centering
    \includegraphics[width=\columnwidth]{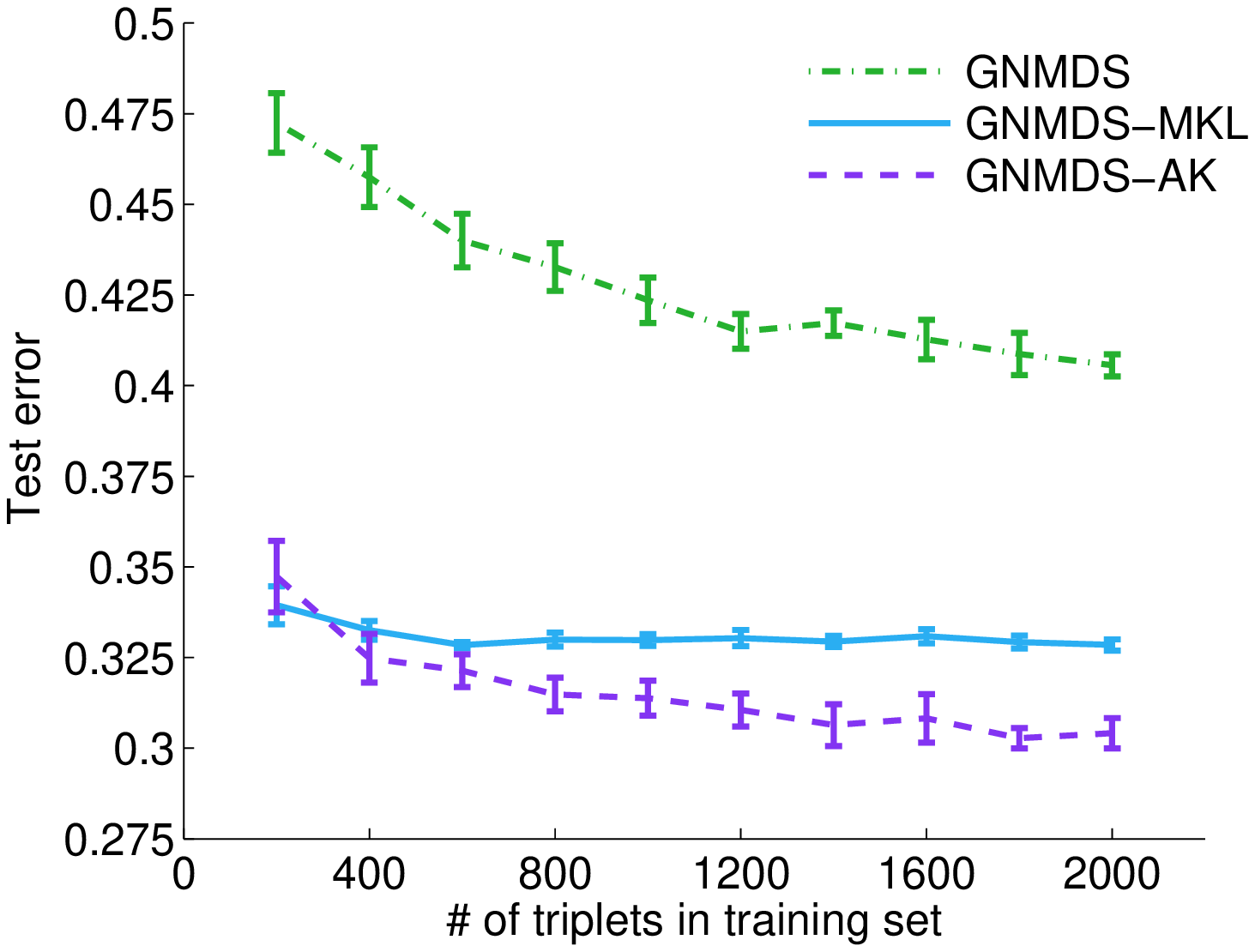}
    \label{fig:GNMDSerrorToy}
  \end{subfigure}
  \hfill\null
    \caption{Mean test error over ten trials of the synthetic data set}
    \label{fig:Toy}
\end{figure}

We believe this experiment demonstrates the utility of both $\mathbf{K}_0$ and the auxiliary kernels in RCKL-AK.  With very few training triplets, the RCKL-AK methods relied on  the auxiliary kernels, thus the performance is similar to the RCKL-MKL methods.  As triplets are added, the RCKL-AK methods used $\mathbf{K}_0$ to satisfy the triplets that a conic combination of the auxiliary kernels could not.  Further evidence for this is shown by the fact that the rank of $\mathbf{K}_0$ increased as the number of training triplets increased.  For example, for STE-AK, the mean rank of $\mathbf{K}_0$ was 85.6, 94.2, and 96.2 for 200, 400, and 600 triplets in the training set, respectively.  In other words, the optimal settings of $\lambda_1$ and $\lambda_2$ made $\mathbf{K}_0$ more expressive as the number of triplets increased.      

Ideally, the RCKL-AK methods should eliminate $\hat{\mathbf{K}}_4$, $\hat{\mathbf{K}}_5$, and $\hat{\mathbf{K}}_6$ from the model by reducing their corresponding weights $\mu_4$, $\mu_5$, and $\mu_6$ to exactly zero.   Figure \ref{fig:ToyMus} shows the values of the $\mathbfup{\mu}$ parameter for STE-AK and GNMDS-AK as the number of triplets increase.  Both RCKL-AK methods correctly identify the three auxiliary kernels from which the ground truth kernel was created by setting their corresponding weight parameters to be non-zero.  In addition, they assigned weights to the kernels roughly proportional to the ground truth.  The three noise kernels were assigned very low, and often zero weights.  The RCKL-MKL methods learned similar values for the elements of $\mathbfup{\mu}$ than those in Fig. \ref{fig:ToyMus}.  Since RCKL-MKL learned the relative importance of the auxiliary kernels with only few triplets, it had achieved approximately its peak performance and could not improve further with the addition of more triplets. 

\begin{figure}
        \centering
        \null\hfill
        \begin{subfigure}{0.48\columnwidth}
                \centering
                \includegraphics[width=\columnwidth]{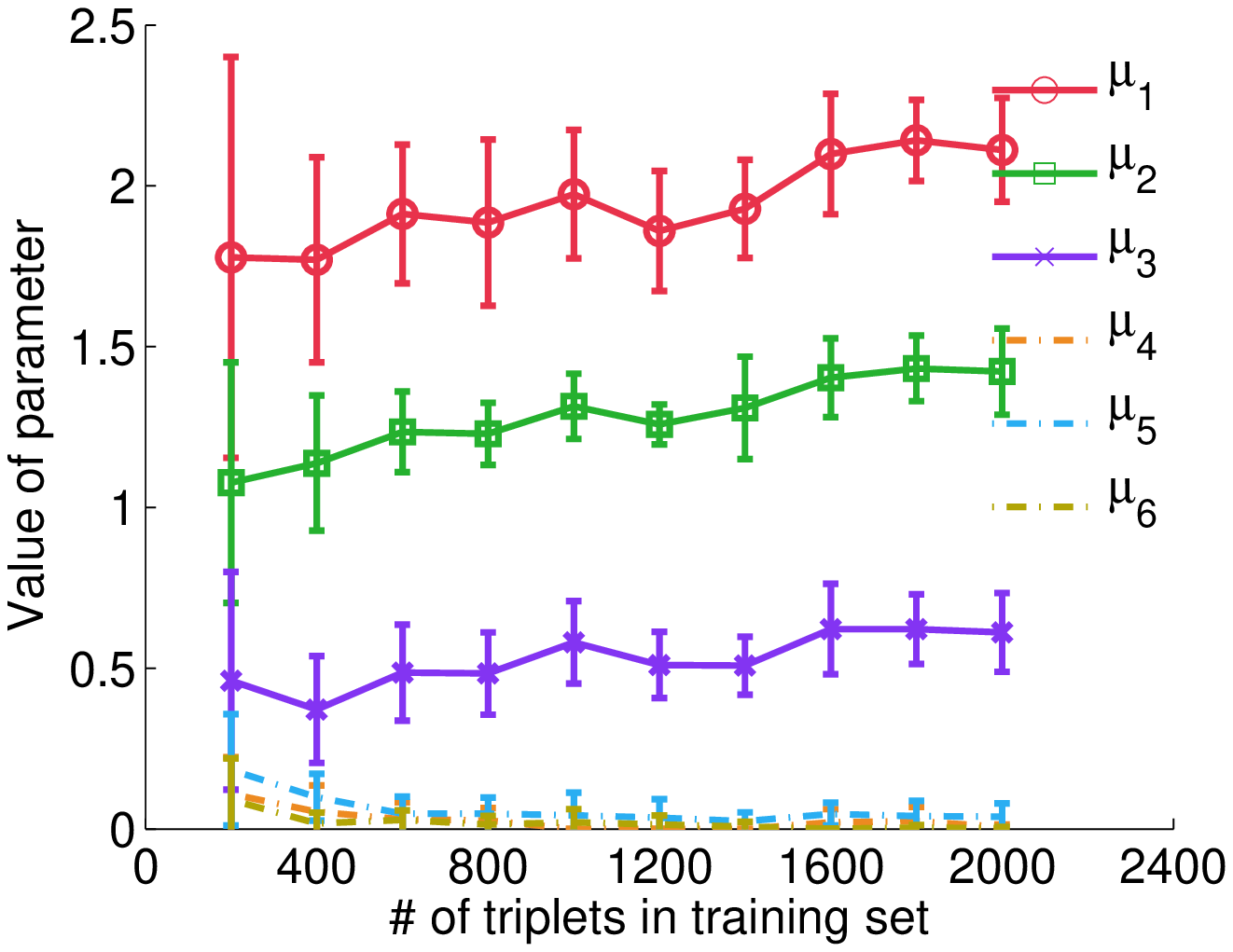}
                \caption{STE-based models}
                \label{fig:STEmusToy}
        \end{subfigure}
        \hfill
        \begin{subfigure}{0.48\columnwidth}
                \centering
                \includegraphics[width=\columnwidth]{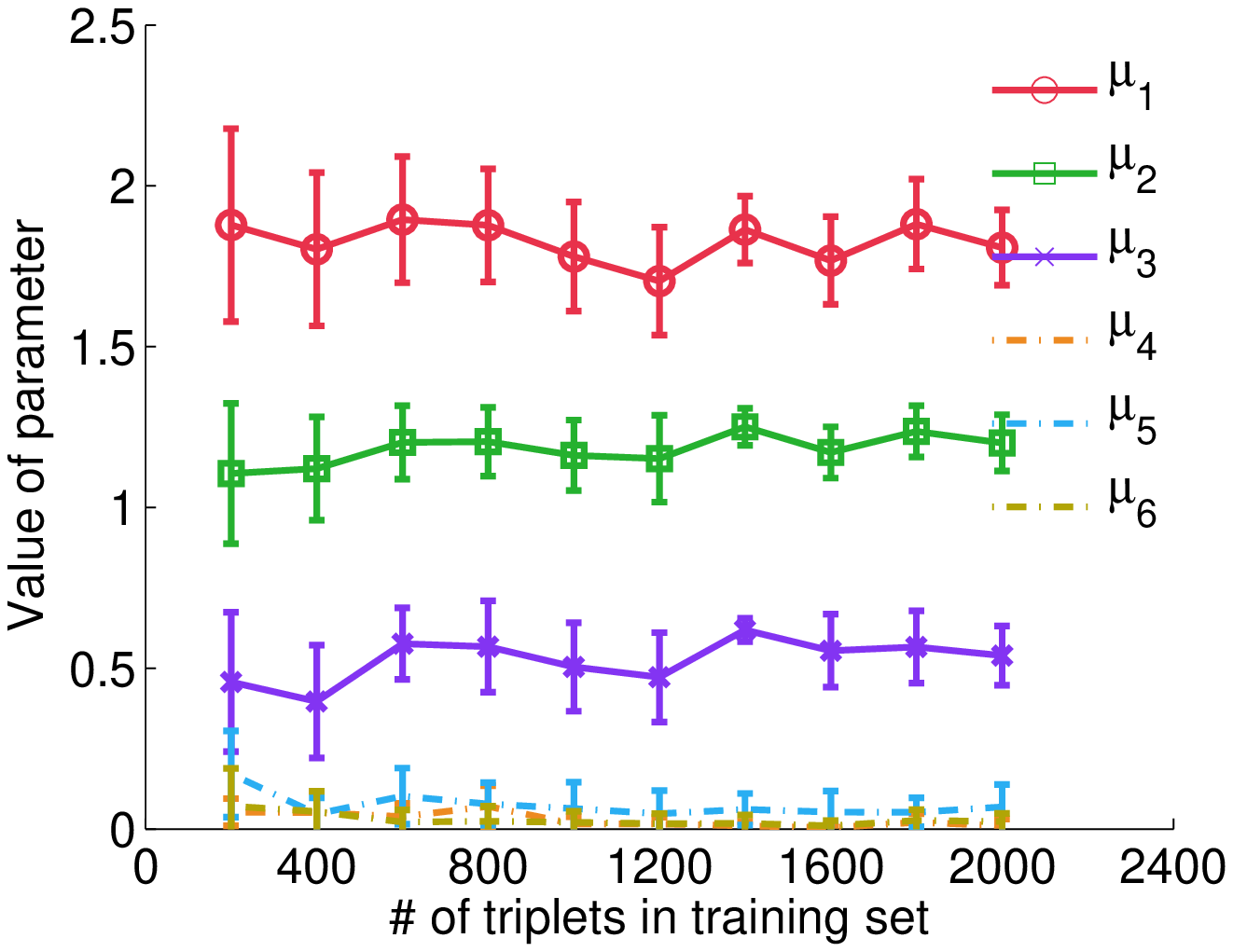}
                \caption{GNMDS-based models}
                \label{fig:GNMDSmusToy}
        \end{subfigure}
        \hfill\null
        \caption{Mean values of $\mathbfup{\mu}$ on synthetic data}\label{fig:ToyMus}
\end{figure}

Figure \ref{fig:ToyMKPOE} shows the same STE-AK, GNMDS-AK, and GNMDS-MKL error plots as Fig. \ref{fig:Toy}, but also includes three variations of MKPOE: A non-convex formulation (MKPOE-NC), and two convex formulations (MKPOE-Full and MKPOE-Diag).  All metric learning methods perform very similarly, yet worse than RCKL-MKL and RCKL-AK.  We believe that  the MKPOE methods must transform the auxiliary kernel space drastically to satisfy the few triplets. By doing this they lose much of the information in the auxiliary kernels that allows RCKL-MKL and RCKL-AK methods to form more general solutions. 

\begin{figure}
        \centering
        \null\hfill
        \begin{subfigure}{0.48\columnwidth}
                \centering
                \includegraphics[width=\columnwidth]{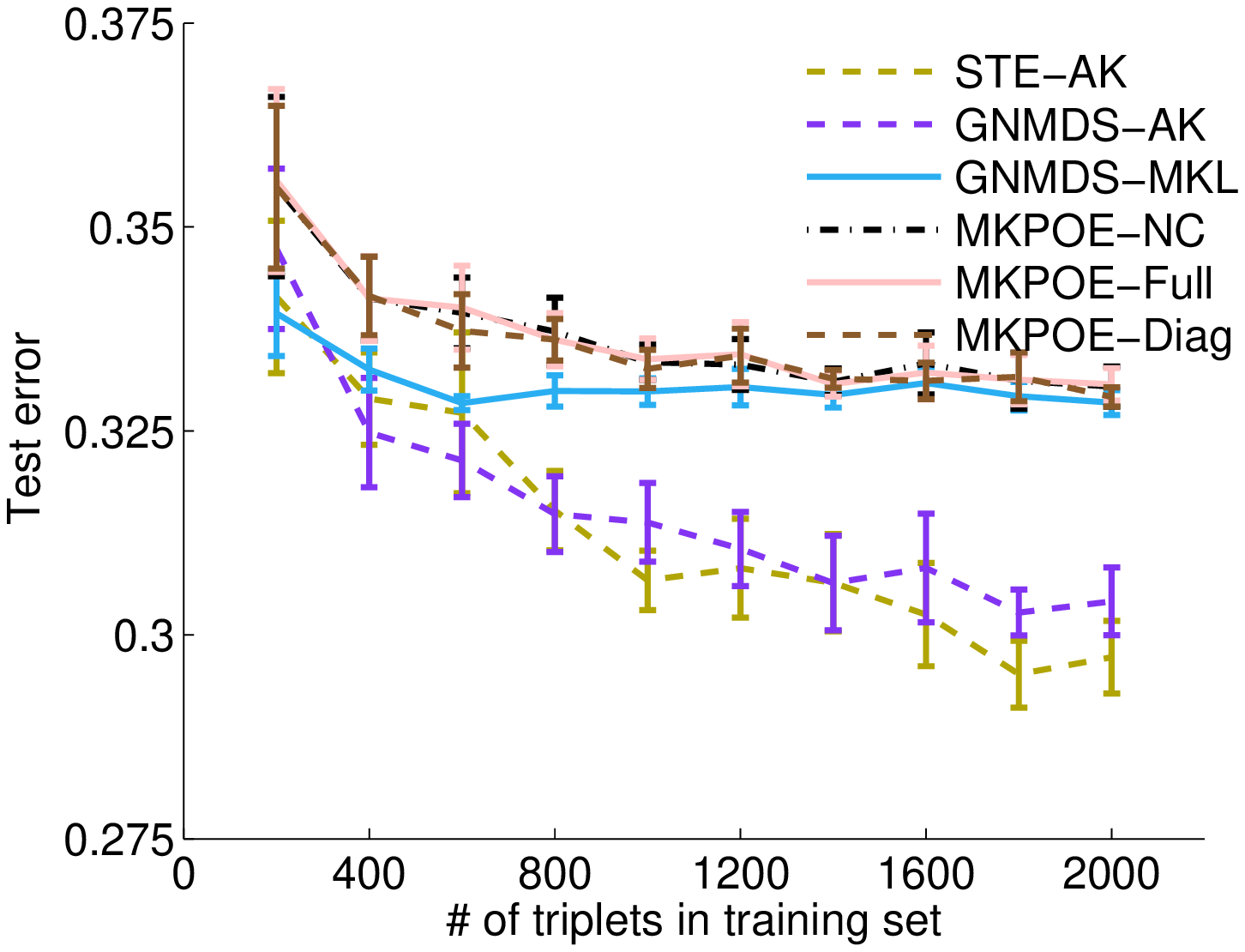}
                \caption{Synthetic data set}
                \label{fig:ToyMKPOE}
        \end{subfigure}
        \hfill
        \begin{subfigure}{0.48\columnwidth}
                \centering
                \includegraphics[width=\columnwidth]{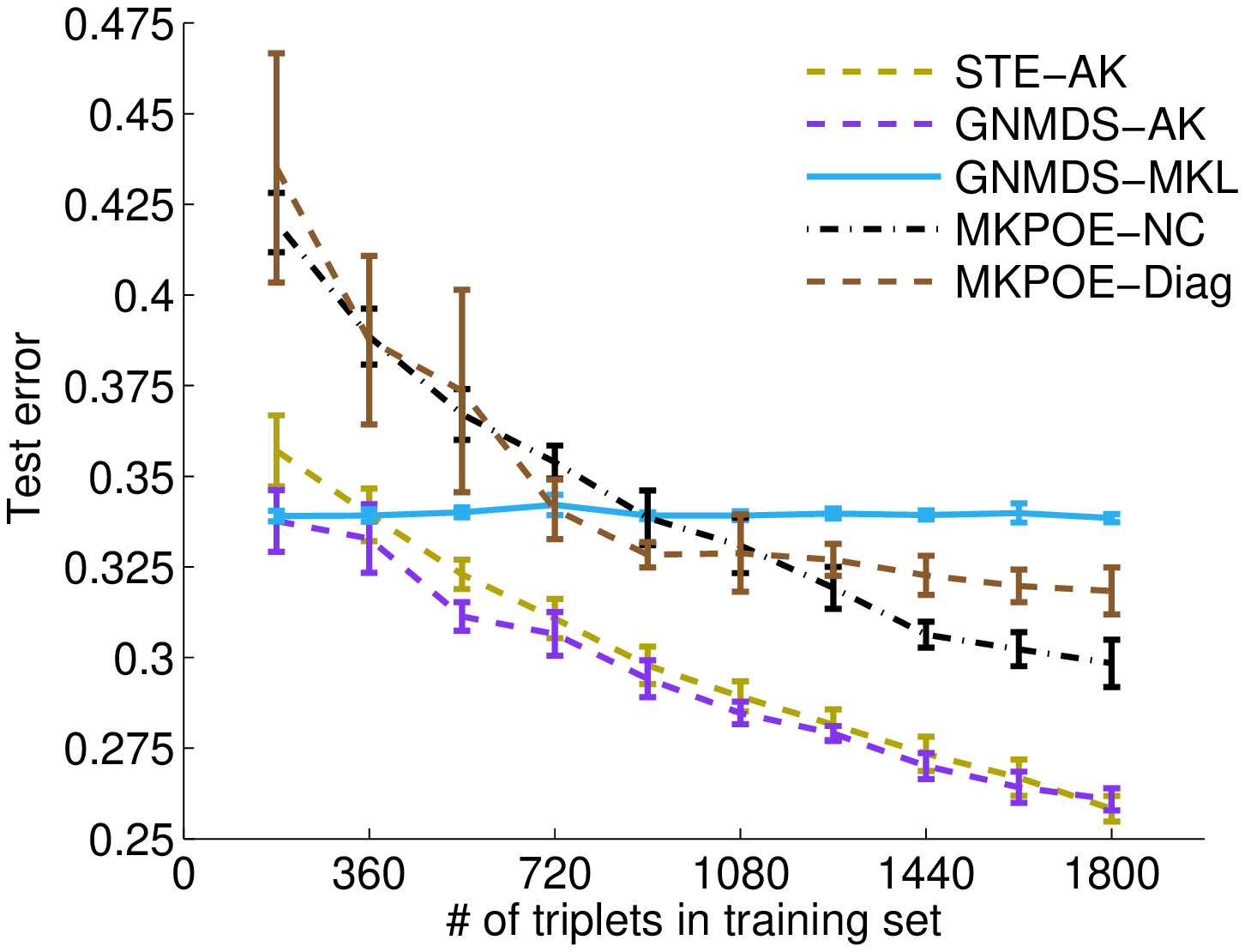}
                \caption{\emph{aset400} data set}
                \label{fig:MusicMKPOE}
        \end{subfigure}
        \hfill\null
        \caption{Mean test error over ten trials}
        \label{fig:MKPOE}
\end{figure}

\subsection{Music Artist Data}
We also performed an experiment using comparisons among popular music artists.  The \emph{aset400} dataset \cite{ellis2002quest} contains 16,385 relative comparisons of 412 music artists gathered from a web survey, and \cite{mcfee2009heterogeneous} provides five kernels built from various features describing each artist and their music.  Two of the kernels were built from text descriptions of the artists, and three were built by extracting acoustic features from songs by each artist.

The \emph{aset400} dataset provides a challenge absent in the synthetic data: not all artists appear in the same number of triplets.  In fact, some artists never appear as the head of a triplet at all.  As a result, this dataset represents a setting where feedback was gathered non-uniformly amongst the objects.  In light of this, instead of training the models in rounds of triplets, we randomly chose 2000 triplets as the development set; the rest were used as the test set.  Like before, we broke the development set into ten subsets, and progressively added subsets to the working set, training and testing each iteration.  Ten percent of the working set was used for validation and 90 percent was used for training.  The experiment was performed ten times on different randomly chosen train/validation/test splits.

{\bf Discussion:} The results, shown in Fig, \ref{fig:Artist}, are similar to those for the synthetic data with a few key differences.  The RCKL-MKL methods did not perform as well relative to the RCKL-T methods.  This could be attributed to the fact that the auxiliary kernels here did not reflect the triplets as well as those in the synthetic data.  Only one kernel was consistently used in every iteration (the kernel built from artist tags).  The rest were either given little weight or completely removed from the model.  As with the synthetic data, with 200 and 400 training triplets the RCKL-AK methods performed as well as their respective RCKL-MKL counterparts, but as more triplets were added to the training set, the RCKL-AK methods began to perform much better.  In this experiment, the RCKL-T methods became more competitive, but were outperformed significantly by RCKL-AK much of the time.  This, again, could be because the auxiliary kernels were less useful than with the synthetic data.  

\begin{figure}
        \centering
        \null\hfill
        \begin{subfigure}{0.48\columnwidth}
                \centering
                \includegraphics[width=\columnwidth]{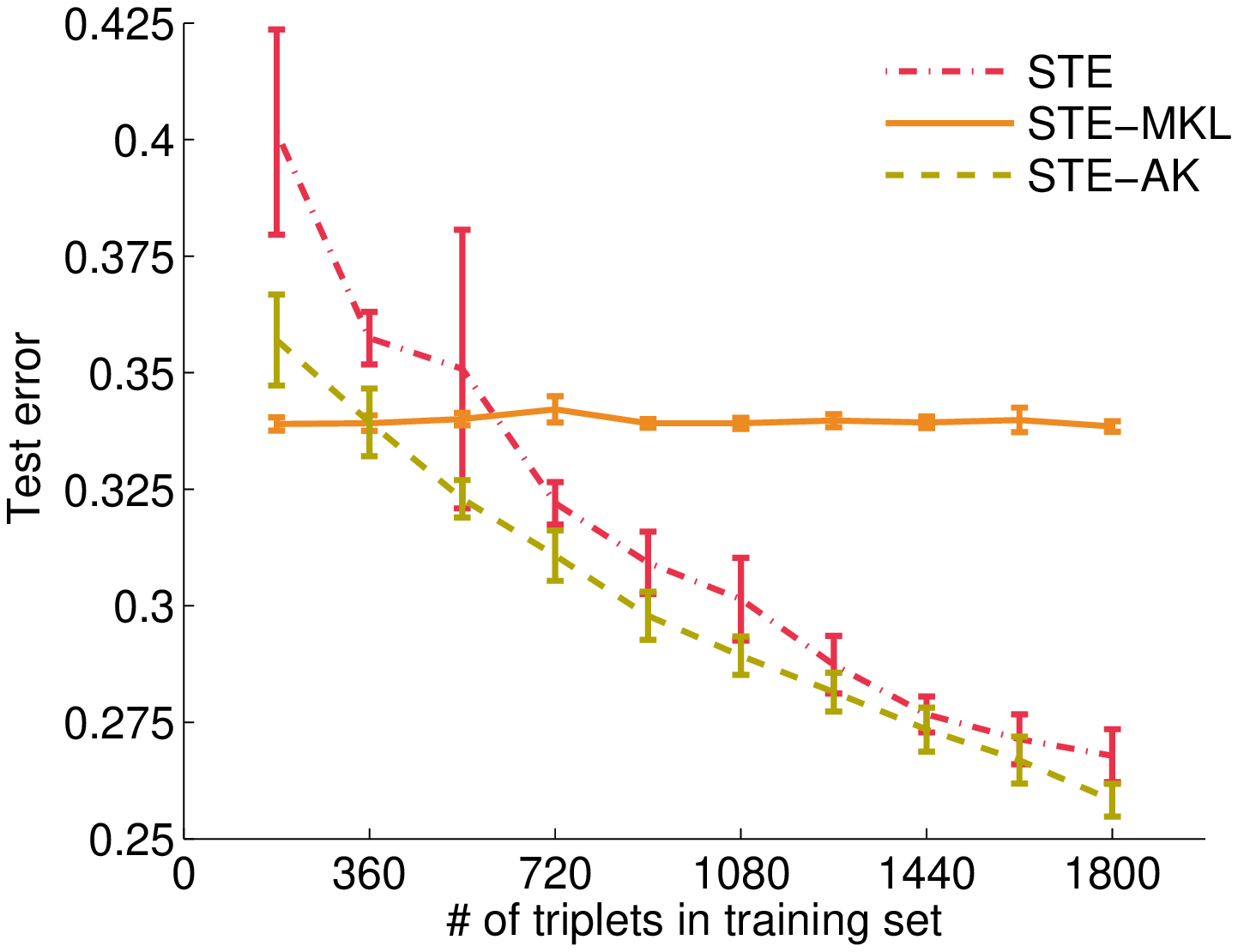}
                \label{fig:STEerrorArtist}
        \end{subfigure}
        \hfill
        \begin{subfigure}{0.48\columnwidth}
                \centering
                \includegraphics[width=\columnwidth]{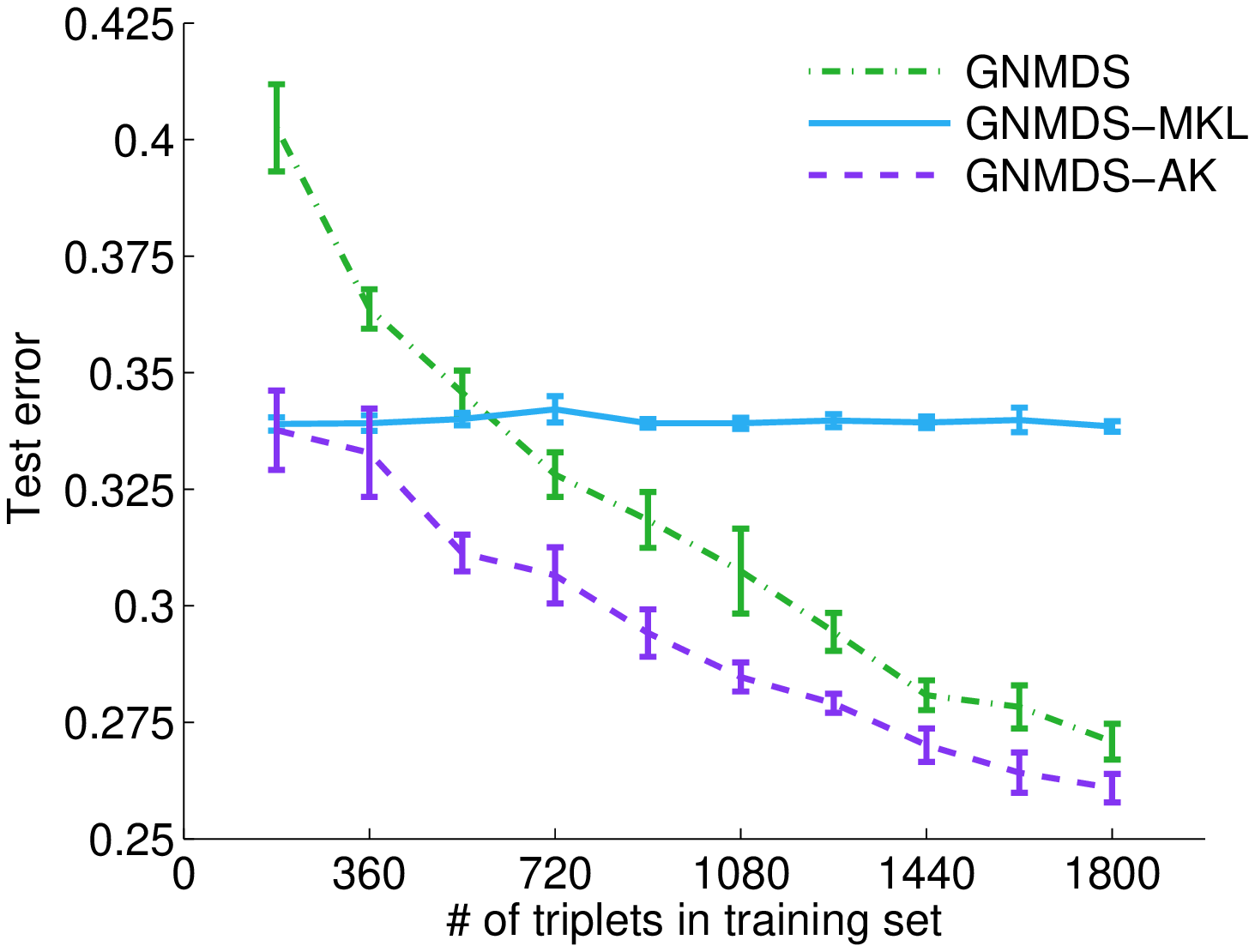}
                \label{fig:GNMDSerrorArtist}
        \end{subfigure}
        \hfill\null
        \caption{Mean test error over ten trials of the \emph{aset400} data set}
        \label{fig:Artist}
\end{figure}

Figure \ref{fig:MusicMKPOE} compares the performance of MKPOE-NC and MKPOE-Diag on the \emph{aset400} dataset to the RCKL-AK methods as well as GNMDS-MKL.  MKPOE-Full could not be included in this experiment due to its impractically long run-time for an experiment of this size.  Both MKPOE methods perform similarly, and seem to suffer greatly from the lack of meaningful auxiliary kernels, but do improve as the number of triplets increases.  Still, over all experiments, the MKPOE methods have statistically significantly higher test error than both RCKL-AK methods.  


\section{Conclusions and Future Work}
\label{sec:future}
In this work we propose a method for learning a kernel from relative comparisons called Relative Comparison Kernel Learning with Auxiliary Kernels (RCKL-AK) that supplements given relative comparisons with auxiliary information.  RCKL-AK is a convex SDP that can be solved by adding slight computational overhead to traditional methods and more efficiently than many metric learning alternatives.  Experimentally, we show that RCKL-AK learns kernels that generalize to more out of sample relative comparisons than the aforementioned traditional and metric learning methods.

There are three main directions of future work.  First, we wish to create a more efficient formulation of RCKL-AK.  While common in solving SDPs, the most time-consuming step in RCKL-AK is performing eigendecomposition in order to project the kernel onto the PSD cone after each gradient step.  In \cite{kulis2006learning} the authors create a kernel learning method that eliminates the need to project onto the PSD cone.  We will investigate extending their method to use auxiliary kernels.  Second, we would like to extend our method to out-of-sample objects.  In \cite{bengio2004out}, generating functions are learned for kernels that were learnt through various popular techniques (LLE, Isomap, etc.).  Similarly, we will attempt to learn a generating function for our kernel.  Finally, we will explore practical applications of our method, specifically, the use of RCKL-AK for product recommendation.

\appendix
\section{Proof of Theorems}
\subsection{Theorem 1}
\label{sec:appTheorem1}
Below we prove Thm. \ref{thm:totalNumTriplets} from Section \ref{sec:theory}
\begin{proof}
For $n$ objects there are $\binom{n}{2}$ pair-wise distances between objects (we do not consider distances between objects and themselves).  A triplet is a comparison between two pair-wise distances with a common object.  Consider a single pairwise distance, and without loss of generality, let this distance be $d(x_a, x_b)$.  There exists exactly $n-2$ other pairwise distances that contain $x_a$  (each with a different second object that is not $x_a$ or $x_b$), and exactly $n-2$ other pairwise distances that contain $x_b$.  Thus, each pair-wise distance can be compared to $2n-4$ other distances.  As a result there are exactly, $\frac{1}{2}\binom{n}{2}*(2n-4) = \frac{1}{2}(n^3-3n^2+2n)$ triplets (the $\frac{1}{2}$ comes from the fact that a single triplet $(a,b,c)$ counts as an answer to the relative comparison ``is $x_a$ more similar to $x_b$ than $x_c$'' and ``is $x_a$ more similar to $x_c$ than $x_b$'').
\end{proof}

\subsection{Theorem 2}
\label{sec:appTheorem2}
To prove Thm. \ref{thm:numTriplets} from Section \ref{sec:theory} we use the directed acyclic graph representation of relative comparisons given in Section 2 of \cite{mcfee2011learning}.  A set of triplets $\mathcal{T}$ is represented by a graph $\mathcal{G} = (\mathcal{V},\mathcal{E})$.  Here, a vertex represents a distance between two objects, and a an edge represents a relative comparison.  Let $v_{\{1,2\}} \in \mathcal{V}$ be a vertex representing the distance between objects $x_1$ and $x_2$ indexed by the unordered pair $\{1,2\}$.  Note the slight notational change from the main body of the paper: Instead of indexing objects by alphabetic characters, we choose to index them by natural numbers for convenience.  As an example, the triplet $(1,2,3)$ can be represented by a directed edge $e = (v_{\{1,2\}} \to v_{\{1,3\}}) \in \mathcal{E}$ from vertex $v_{\{1,2\}}$ to vertex $v_{\{1,3\}}$.  Because this work considers triplets, vertices need to have a common object between them for there to be a edge between them.  As an example, vertices $v_{\{1,2\}}$ and $v_{\{1,3\}}$ can have an edge between them, but $v_{\{1,2\}}$ and $v_{\{3,4\}}$ cannot.  Any cycle in $\mathcal{G}$ constitutes a conflict in relative comparisons.  For this work we assume that there exists no conflicts or that conflicts can be resolved either algorithmically, such as in \cite{mcfee2011learning} or by querying the source of triplets again.  From this point forward we say an edge is ``valid'' if it does not create a conflict and it connects two vertices that have a common object between them.  An edge, and thus the corresponding triplet, can be inferred if there exists a path of length greater than 1.  For example, if there is an edge from $v_{\{1,2\}}$ to $v_{\{1,3\}}$, and an edge from $v_{\{1,3\}}$ to $v_{\{1,4\}}$ then the edge from $v_{\{1,2\}}$ to $v_{\{1,4\}}$ can be inferred.

Theorem 2 considers the case where triplets are given one at a time.  Let $t_i$, be the $i$th triplet to be given, and $\mathcal{T}_i = \{t_j | j \leq i\}$.  Let $\mathcal{T}^{trans}_i$ be the triplets that can be inferred by $\mathcal{T}_i$.  For this analysis we study the directed acyclic graph analogues of these sets of triplets: $\mathcal{G}_i$ and $\mathcal{G}^{trans}_i$.  Let $e_i \in \mathcal{E}_i$ be the edge given at time $i$. We assume that an edge given at time $i$ has not been given at time $j < i$.  Next, we introduce the following recursive adversarial strategy for giving edges to this graph (square brackets ([]) to denote an ordered list of elements with subscripts indicating each element's position in the list):

\begin{algorithm}
\center \caption{Adversarial Strategy}
\begin{algorithmic}[1]
\Function{Adversary}{$n \in \mathbb{N}_{> 2}$}
  \If{$n == 3$}
     \State $e_1 \gets (v_{\{1,3\}} \to v_{\{1,2\}})$ 
     \State $e_2 \gets (v_{\{2,3\}} \to v_{\{1,2\}})$ 
     \State $e_3 \gets (v_{\{2,3\}} \to v_{\{1,3\}})$
     \State $idx \gets 3$
  \Else
     \State $idx \gets \frac{1}{2}((n-1)^3-3(n-1)^2+2(n-1))$
     \State $[e_1,...,e_{idx}] \gets $\Call{Adversary}{$n-1$}
     \State $\mathcal{V}^{new} \gets $\Call{GetNewVertices}{$n$}
     \State $[v_1^{old},...,v_{\binom{n-1}{2}}^{old}] \gets $\Call{FindOldVertexOrder}{$[e_1,...,e_{idx}]$}
     \For{$j = 1$ to $\binom{n-1}{2}$}
        \State $[e_{idx+1},e_{idx+2}] \gets $\Call{AddRemainingEdges}{$v_j^{old}$,  $\mathcal{V}^{new}$}
        \State $idx \gets idx + 2$
     \EndFor
     \ForAll{vertices in $\mathcal{V}^{new}$}
         \State $v_{rnd} \gets$ \Call{SelectRandomVertex}{$\mathcal{V}^{new}$}
         \State $[e_{idx+1}$,...,$e_{idx'}], idx' \gets $\Call{AddRemainingEdges}{$v_{rnd}$, $\mathcal{V}^{new}$}
         \State $idx \gets idx'$
         \State $\mathcal{V}^{new} \gets \mathcal{V}^{new} \backslash v_{rnd}$
     \EndFor
  \EndIf
  \State \textbf{return} $[e_1,...,e_{idx}]$
\EndFunction
\end{algorithmic}
\label{alg:adversarialStrategy}
\end{algorithm}

The following lines describe the process in which edges are given by Alg. \ref{alg:adversarialStrategy} in more detail:

\begin{enumerate}
  \item \textbf{Lines 2-6:} If n=3 (the base case) give the three edges on lines 3-5.
  \item \textbf{Line 9:} Get the adversarial solution for n-1 objects.
  \item \textbf{Line 10:} Call the procedure ``GetNewVertices'' which returns the set of vertices in the graph for $n$ objects that are not in the graph for $n-1$ objects.  This set is all vertices with $n$ their indexing pair.  We will call these vertices ``new vertices'' and all edges that connect to at least one new vertex ``new edges''.  We will call the vertices in the solution that are not new vertices ``old vertices'', and the edges that connect two old vertices ``old edges''.  Note, that $e_1,...,e_{idx}$ on line 9 contains every old edge and no new edges.
  \item \textbf{Line 11:} Call the procedure ``FindOldVertexOrder'', which takes the ordered set of old edges given as input.  This procedure orders all old vertices in the order in which they appear in the incoming side of an edge.  If an old vertex has no incoming edges it is added to the end of the list.  For instance, if the input is the solution for the base case, the output would be the list $[v_{\{1,2\}},v_{\{1,3\}},v_{\{2,3\}}]$.
  \item \textbf{Line 12:} Loop for each old vertex.
  \item \textbf{Line 13:} Call the procedure ``AddRemainingEdges'' which exhaustively gives all remaining edges involving $v_i^{old}$ in any order and points them towards $v_i^{old}$, assigning them to $e_{idx+1}$ and $e_{idx+2}$.  After this, all valid edges to or from $v_i^{old}$ are given.  Note that these new edges given here point from new vertices to $v_i^{old}$.
   \item \textbf{Line 14:} The index for the given edge numbers is incremented.
   \item \textbf{Line 16:} Loop for each new vertex
   \item \textbf{Line 17:} Call procedure ``SelectRandomVertex'' which randomly selects a vertex from $\mathcal{V}^{new}$ and assigns it to $v_{rnd}$
   \item \textbf{Line 18:} Call the procedure ``AddRemainingEdges'' which gives all remaining valid edges $e_{idx+1}$,...,$e_{idx'}$ involving $v_{rnd}$ and have them point toward $v_{rnd}$.  Note that these edges only connect new vertices together.
   \item \textbf{Lines 19 and 20:} Update the current number of edges given and remove $v_{rnd}$ from $\mathcal{V}^{new}$ so it does not get chosen in line 21 next iteration.
   \item \textbf{Line 23:} Return the given edges, their order indicated by their indices.
\end{enumerate}  

Next, we state the following lemmas:

\begin{lemma}
  \label{lemma:allEdges}
  Algorithm \ref{alg:adversarialStrategy} exhaustively gives all valid edges for a given $n$.
\end{lemma}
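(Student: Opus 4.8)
I would induct on $n$, maintaining three invariants at once: every valid edge of the graph $\mathcal{G}$ on the $\binom{n}{2}$ distance-vertices eventually gets given, no edge is given twice, and after every prefix $[e_1,\dots,e_k]$ the partial graph is acyclic (so each edge given is genuinely valid). The base case $n=3$ is immediate: the three vertices $v_{\{1,2\}},v_{\{1,3\}},v_{\{2,3\}}$ pairwise share an object, so there are exactly three valid undirected edges, and lines 3--5 orient all three of them consistently with the vertex order $v_{\{2,3\}},v_{\{1,3\}},v_{\{1,2\}}$; this gives three distinct edges, an acyclic graph, and matches $|\mathcal{T}^{total}|=\tfrac{1}{2}(27-27+6)=3$ from Theorem \ref{thm:totalNumTriplets}.

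For the inductive step, call a vertex \emph{new} if its index pair contains $n$ and \emph{old} otherwise, so there are $n-1$ new vertices and $\binom{n-1}{2}$ old ones, and every valid edge is old--old, old--new, or new--new. The recursive call on line 9 is, by the induction hypothesis, the run of the algorithm on the $(n-1)$-object graph, whose vertex and edge sets are exactly the old vertices and old--old edges, so line 9 gives every old--old edge once, acyclically. An old vertex $v_{\{a,b\}}$ (with $a,b\le n-1$) shares an object with exactly the two new vertices $v_{\{a,n\}}$ and $v_{\{b,n\}}$, so it has exactly two incident old--new edges, and each old--new edge is incident to exactly one old vertex; hence the loop on lines 12--15, which directs those two edges into each $v_j^{old}$, gives every old--new edge exactly once. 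Every pair of new vertices shares the object $n$, so exactly one edge is to be chosen per such pair; the loop on lines 16--22 repeatedly picks a remaining new vertex $v_{rnd}$, directs into it all still-ungiven edges to the other remaining new vertices, and deletes it, so each new--new pair gets exactly one edge, namely the one created when the first of the two to be selected is chosen. Counting, the number of edges given is $\tfrac{1}{2}(n-1)(n-2)(n-3) + (n-1)(n-2) + \tfrac{1}{2}(n-1)(n-2) = \tfrac{1}{2}n(n-1)(n-2)$, matching Theorem \ref{thm:totalNumTriplets}; combined with the disjoint, exhaustive covering of the three edge classes this gives the ``every valid edge once'' part.

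The main obstacle is the acyclicity invariant, i.e. showing no given edge ever creates a conflict. I would exhibit a single topological order of the finished graph: the new vertices first, in the reverse of the order line 17 selects them, followed by the old vertices in any topological order of the old--old DAG furnished by the induction hypothesis. Old--old edges respect this since both endpoints lie in the old block in a consistent order; each old--new edge runs from a new vertex to an old vertex, and every new vertex precedes every old vertex; and a new--new edge directed into $v_{rnd}$ comes from a vertex $u$ still present when $v_{rnd}$ is removed, hence selected strictly later, hence preceding $v_{rnd}$ in the reverse-selection order --- in all cases the source precedes the sink. So the final graph is a DAG, every prefix is a subgraph of it and thus acyclic, and every edge connects two vertices sharing an object, so each edge given is valid. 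Along the way one must also verify that at each ``AddRemainingEdges'' call the ``remaining'' incident edges are exactly those described above (no old--new edge can have been given before its unique old endpoint is processed; no new--new edge to a still-present vertex before either of its endpoints is selected), which preserves the ``no edge twice'' invariant and closes the induction.
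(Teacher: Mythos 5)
Your plan is correct, and for the part the paper actually argues it follows the same decomposition: the recursive call on line 9 supplies every old--old edge (induction), the loop at line 12 supplies the old--new edges, and the loop at line 16 supplies the new--new edges, so the three classes together exhaust all pairs of vertices sharing an object. The paper's own proof of this lemma is just that observation stated in two sentences; you flesh it out with the explicit count $\tfrac{1}{2}(n-1)(n-2)(n-3)+(n-1)(n-2)+\tfrac{1}{2}(n-1)(n-2)=\tfrac{1}{2}n(n-1)(n-2)$ against Theorem \ref{thm:totalNumTriplets}, and with the ``each old vertex has exactly two new neighbours / each new--new pair is settled when its first member is selected'' bookkeeping, which is a genuinely more careful treatment of ``every valid edge exactly once.'' Where you diverge is the acyclicity invariant: the paper's Lemma \ref{lemma:allEdges} proof does not address conflicts at all (that burden is carried by Lemmas \ref{lemma:neverInfer3} and \ref{lemma:neverInfer2} via the two elimination conditions), whereas you certify conflict-freeness directly by exhibiting a topological order (new vertices in reverse selection order, then the old vertices in a topological order of the old--old DAG). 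That ordering argument is clean and correct, and it is the right thing to include if one reads ``valid'' as the paper defines it (non-conflicting and sharing an object); just note that it only establishes that the final graph is a DAG, not the stronger non-inference property $\mathcal{E}^{trans}_i\setminus\mathcal{E}_i=\emptyset$ that the paper's Lemmas \ref{lemma:neverInfer3}--\ref{lemma:neverInfer2} are really after, so it complements rather than replaces those arguments.
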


\begin{proof}
The loop beginning on line 12 loops over all old vertices and exhaustively gives all edges involving these vertices.  The loop beginning at line 16 does the same for all new vertices.  Since the sets of old and new vertices together include every vertex, valid edges are given by line 23 and are returned.
\end{proof}

In the subsequent proofs we will refer to any valid edge $e \notin \mathcal{E}_i$ to be ``ungiven'' at time $i$.  A fact used to prove the following two lemmas is that when the adversarial strategy gives an edge at time $i$, a vertex or group of vertices can never be used to infer an ungiven edge if two conditions hold:

\begin{enumerate}
  \item If all valid edges have been given involving a vertex or a group of vertices at time $i-1$
  \item If there does not exist an edge leaving a vertex or group of vertices to a vertex with at least one ungiven edge at time $i-1$    
\end{enumerate}

The first condition is somewhat obvious in that if all edges are given involving a vertex or group of vertices, then no ungiven edges involving this vertex can be inferred, because there are no ungiven edges involving this vertex.  Thus, the only way for a vertex or group of vertices can be used to infer an ungiven edge is that a path through them is used to infer an ungiven edge between two external vertices.  The second condition ensures that no such path exists.  As a result, regardless of what edge the adversary gives at time $i$, the ungiven inferred edges of graph $\mathcal{G}_i$ is always the same as that on $\mathcal{G}_i$ with all vertices or groups of vertices removed that satisfy these two conditions. With this we can prove the following two lemmas:  

\begin{lemma}
\label{lemma:neverInfer3}
Line 13 of Alg. \ref{alg:adversarialStrategy} never gives edges that infer ungiven edges.
\end{lemma}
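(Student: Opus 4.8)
The plan is to induct on $n$ and, inside the else-branch of Algorithm~\ref{alg:adversarialStrategy}, reason about the for-loop of line~12 one iteration at a time, using the vertex/group-removal principle stated just before this lemma. Fix $n>3$, and consider the instant at which line~13, while processing the $j$-th old vertex $v_j^{\mathrm{old}}$ (in the order returned by \textsc{FindOldVertexOrder}), gives a new edge $e=(v^{\mathrm{new}}\to v_j^{\mathrm{old}})$; let $\mathcal{G}$ be the graph of all edges given up to and including $e$. At this instant: every old--old edge has already been given (line~9's list contains them all and no others); for every $i<j$, all edges incident to $v_i^{\mathrm{old}}$ have been given, since $v_i^{\mathrm{old}}$ was fully processed in an earlier iteration; and no new--new edge has been given, since those are produced only by the loop starting at line~16. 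In particular the only out-edges of old vertices present anywhere in the construction are the old--old edges from line~9.

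Next I would remove the group $S_j=\{v_1^{\mathrm{old}},\dots,v_{j-1}^{\mathrm{old}}\}$. Every valid edge incident to a vertex of $S_j$ is already in $\mathcal{G}$, so $S_j$ meets the first removal condition. For the second, I would invoke the structural invariant of the construction that line~9's returned list is ordered so that \textsc{FindOldVertexOrder} produces a \emph{reverse topological order} of the old subgraph; equivalently, every old--old edge points from a vertex later in that order to a vertex earlier in it. Then every out-edge of an $S_j$-vertex points backward, hence stays inside $S_j$, and there are no old$\,\to\,$new edges at all (line~13 only adds new$\,\to\,$old edges), so no edge leaves $S_j$ whatsoever, let alone to a vertex with an ungiven edge. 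By the removal principle, the ungiven edges inferrable in $\mathcal{G}$ coincide with those inferrable in $\mathcal{G}\setminus S_j$.

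It remains to check that in $\mathcal{G}\setminus S_j$ no path of length greater than one passes through a freshly given edge. Each line~13 edge points \emph{into} $v_j^{\mathrm{old}}$. After deleting $S_j$, the vertex $v_j^{\mathrm{old}}$ has no outgoing edge, since its only out-edges were old--old edges pointing backward into $S_j$; hence a path cannot extend past $v_j^{\mathrm{old}}$. Symmetrically, $v^{\mathrm{new}}$ has no incoming edge in $\mathcal{G}$ (no old$\,\to\,$new edges exist, and the new$\,\to\,$new edges have not yet been given), so a path cannot extend before $v^{\mathrm{new}}$ either. Therefore every path through a line~13 edge has length exactly one and yields no inferred edge at all, in particular no ungiven one; since this holds for each such edge at the moment it is given, the lemma follows.

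The main obstacle is the structural invariant used for the second removal condition, namely that line~9's list induces a reverse topological order of the old subgraph. This is not self-evident and should be carried as part of a joint induction with Lemma~\ref{lemma:allEdges} and the acyclicity of the construction: one must track how line~13 (which appends new$\,\to\,$old edges whose heads either already appear as heads or appear for the first time only after every strictly earlier old vertex) and lines~16--18 (which append new$\,\to\,$new edges) extend the edge list, and verify that at the next recursion level \textsc{FindOldVertexOrder} still orders the enlarged set of ``old'' vertices compatibly with all of their edges. Once that invariant is secured, the removal-principle argument above is routine.
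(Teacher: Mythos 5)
Your argument is essentially the paper's own proof: the paper likewise invokes the two removal conditions to eliminate the already-exhausted old vertices (doing so one vertex at a time rather than as your group $S_j$, which amounts to the same thing) and then observes that each line-13 edge points into a vertex with no remaining outgoing edges, so no path of length greater than one, and hence no inferred ungiven edge, can be created. The ordering invariant you flag as the main obstacle is exactly what the paper dispatches in a single sentence: since at every stage the algorithm gives all remaining edges pointing \emph{into} the vertex currently being processed, any out-edge of a vertex must go to a vertex processed earlier, and \textsc{FindOldVertexOrder} lists the old vertices precisely in that processing order (order of first appearance as an edge's head), so every old--old edge points backward in the list and your ``reverse topological order'' holds without a separate joint induction.
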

\begin{proof}
Consider line 11 of Alg. \ref{alg:adversarialStrategy}.  It orders the old vertices in the order in which they appear as the vertex being pointed to in the solution for $n$ objects.  Any vertex in this order only points to those that appear before it in the list, because when a vertex is chosen by this algorithm all valid edges involving this vertex are given pointing inward.  Consider the first iteration of the for loop starting on line 12.  The first vertex in this list has only incoming edges (namely, $v_{1,2}$ due to the base case).  Adding all remaining ungiven edges inward to this node cannot create any inferred edges because all old edges are going inward due to it being first in the list, thus Lem. \ref{lemma:neverInfer3} holds. In addition, after line 13 this node satisfies the first condition above.  It also satisfies the second condition as there exists no outgoing edges, thus no paths can go through it.  By these two conditions, it can be eliminated from the graph.  By doing so, all edges connecting it to other vertices are also eliminated.  As a result, the second vertex effectively becomes the first vertex in the list, making it only have incoming edges.  In the next iteration the second vertex is chosen, and, again, all remaining edges are pointed inward.  Adding of these edges also satisfies Lem. \ref{lemma:neverInfer3}, the same conditions hold, and it can effectively be removed.  The loop repeats this for all old vertices.  As such, for all iterations of the loop starting on line 12, line 13 never gives edges that infer ungiven edges. 
\end{proof}

\begin{lemma}
\label{lemma:neverInfer2}
Line 18 of Alg. \ref{alg:adversarialStrategy} never gives edges that infer ungiven edges.
\end{lemma}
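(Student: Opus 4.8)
The plan is to mirror the argument used for Lemma~\ref{lemma:neverInfer3}, but now applied to the loop beginning on line~16 of Alg.~\ref{alg:adversarialStrategy} rather than the loop on line~12. First I would pin down the state of the graph $\mathcal{G}_{i-1}$ at the moment just before an arbitrary iteration of that loop gives the edges incident to the chosen vertex $v_{rnd}$. Tracing the construction exactly as in the proof of Lemma~\ref{lemma:allEdges}, at this point: (i)~every valid edge incident to an old vertex has already been given, and by line~13 every old--new edge points from the new vertex into the old vertex; (ii)~every valid edge incident to a new vertex selected in an earlier iteration (a ``done'' vertex) has already been given, and line~18 orients edges from not-yet-done new vertices into done vertices; (iii)~no edge between two not-yet-done new vertices has been given yet. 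In particular, $v_{rnd}$ and every other not-yet-done new vertex have at time $i-1$ only outgoing edges, all of them pointing into old vertices or into done vertices, and no incoming edges at all.

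Next I would invoke the two elimination conditions stated just before Lemma~\ref{lemma:neverInfer3} to reduce the graph used for reasoning about inferred edges. Every old vertex satisfies condition~1 by~(i), and condition~2 because its only outgoing edges go to other old vertices, all of whose incident edges are already given; every done new vertex satisfies condition~1 by~(ii), and condition~2 because its only outgoing edges go to old vertices or to earlier done vertices, again all of whose incident edges are given. Hence all old and all done new vertices may be deleted without changing which ungiven edges are inferable. Moreover, since those deleted vertices have no incident ungiven edges, every ungiven edge lies entirely within the reduced graph, whose vertex set is exactly the not-yet-done new vertices and which, by~(iii), currently contains no edges.

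Now consider the call to \textsc{AddRemainingEdges} on line~18: it supplies precisely the remaining valid edges between $v_{rnd}$ and the other not-yet-done new vertices, all oriented into $v_{rnd}$. Within the reduced graph this makes $v_{rnd}$ a sink and gives the remaining not-yet-done vertices no incoming edges, so there is no path of length at least two through any new edge, and therefore no ungiven edge is inferred. I would finish by noting that any inference whose path leaves $v_{rnd}$ must use one of $v_{rnd}$'s pre-existing outgoing edges into an old or done vertex, so any edge so inferred is already given by~(i) or~(ii), and any inferred edge terminating at $v_{rnd}$ is itself among the edges supplied by the same \textsc{AddRemainingEdges} call; hence line~18 never gives an edge that infers an ungiven edge. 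I expect the main obstacle to be the bookkeeping in the first step — establishing the orientations~(i)--(iii) directly from the recursion and the order produced by \textsc{FindOldVertexOrder} — and being careful that the reduced-graph argument genuinely accounts for inferences whose target touches a deleted vertex rather than silently dropping them.
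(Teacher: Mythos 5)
Your proposal is correct and follows essentially the same route as the paper: it invokes the same two elimination conditions to remove the old vertices (and, in your phrasing, the already-processed new vertices), reducing to a graph of remaining new vertices with no edges among them, and then observes that the edges given on line~18 all point into $v_{rnd}$, which therefore becomes a sink admitting no path of length at least two and hence no inferred ungiven edge. The only cosmetic difference is that the paper eliminates the processed new vertices one iteration at a time, whereas you characterize the graph state at an arbitrary iteration and eliminate them all at once; the underlying argument is the same.
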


\begin{proof}
By line 16 all edges connected to or from any old vertex has already been given via the loop staring on line 12.  Thus, the first condition above is satisfied for all old vertices.  Line 13 only adds edges from new vertices to old vertices, no edge leaves the subgraph of old vertices to the subgraph of new vertices.  This satisfies the second condition.  As a result, all old vertices can be eliminated from consideration.  What is left is just the new vertices with no edges connecting them to any other vertex.  Consider the first iteration of the loop starting on line 16.  Selecting a vertex $v_{rnd}$ and giving all edges pointing inward cannot infer any edges, because it has only incoming edges.  Thus, this operation does not violate Lem. \ref{lemma:neverInfer2}. In addition, the first condition is satisfied when all edges are given, and since there are no paths through this vertex, condition 2 is satisfied.  As a result, this vertex too can then be eliminated.  For the next iteration we now have one less new vertex, but, again, no edges connecting the ones left.  This loop repeats until all edges are given.    
\end{proof}

Given these three lemmas we proceed by proving the following proposition:
\begin{proposition}
For a given $n \in \mathbb{N}_{>2}$ , there exists an adversarial strategy for giving edges such that $\forall_{i \in\{1, ..., |\mathcal{E}^{total}|\}}$, $\mathcal{E}^{trans}_i \backslash \mathcal{E}_i = \emptyset$
\label{prop:neverInfer}
\end{proposition}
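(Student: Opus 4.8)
The plan is to take Algorithm~\ref{alg:adversarialStrategy} itself as the witnessing adversarial strategy and to prove, by induction on $n$, that when edges are revealed in the order it prescribes one has $\mathcal{E}^{trans}_i \setminus \mathcal{E}_i = \emptyset$ for every $i$. For the base case $n=3$ the algorithm gives only $e_1=(v_{\{1,3\}}\to v_{\{1,2\}})$, $e_2=(v_{\{2,3\}}\to v_{\{1,2\}})$, $e_3=(v_{\{2,3\}}\to v_{\{1,3\}})$; after $e_1$ and after $e_2$ every directed path has length $1$, so nothing is inferred, and $e_3$ is added when every valid edge on three objects is already present, so again nothing \emph{ungiven} is inferred. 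Since, under the edge/triplet correspondence, Theorem~\ref{thm:totalNumTriplets} gives exactly $\frac{1}{2}(n^3-3n^2+2n)$ valid edges and Lemma~\ref{lemma:allEdges} guarantees the algorithm produces all of them, the index range $\{1,\dots,|\mathcal{E}^{total}|\}$ is precisely the set of time steps at which an edge is given, so it suffices to verify the claim at each such step.

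For the inductive step I assume the statement for $n-1$ and split the edge sequence produced for $n$ objects into three consecutive phases: phase (a), the edges $e_1,\dots,e_{idx}$ returned by the recursive call on line~9; phase (b), the edges added inside the loop on lines~12--15; and phase (c), the edges added inside the loop on lines~16--22. In phase (a) the only edges present are old edges, each joining two old vertices, so no directed path reaches any new vertex; hence no new edge can be inferred, and by the induction hypothesis no ungiven old edge is inferred either, so phase (a) contributes nothing to $\mathcal{E}^{trans}_i\setminus\mathcal{E}_i$. Phases (b) and (c) are then handled directly by Lemmas~\ref{lemma:neverInfer3} and~\ref{lemma:neverInfer2}, which assert exactly that lines~13 and~18 never give edges that infer ungiven edges. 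Chaining the three phases and invoking Lemma~\ref{lemma:allEdges} to cover the full index range completes the induction.

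The glue between the phases — and where I expect the real work to sit — is the vertex-elimination principle stated just before Lemma~\ref{lemma:neverInfer3}: once all valid edges incident to a vertex (or group of vertices) have been given and no edge leaves that set to a vertex still missing an incident edge, that set can never afterwards lie on a path that infers an ungiven edge and may be deleted from the graph for the rest of the analysis. Making this rigorous requires checking that the two conditions are \emph{stable} under every subsequent edge addition (they are, because edges are only added, never removed, so ``all incident edges given'' and ``no escaping edge'' both persist), that at the start of phase (b) each old vertex enters this situation in the order fixed by \textsc{FindOldVertexOrder}, and that at the start of phase (c) the whole block of old vertices is in it, so that the within-phase inductions of Lemmas~\ref{lemma:neverInfer3} and~\ref{lemma:neverInfer2} — process one vertex at a time, observe it has only incoming edges when its turn comes, then eliminate it — actually go through. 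The main obstacle is thus essentially bookkeeping: justifying the ``eliminate a settled vertex'' step and confirming that the orderings on lines~11 and~17 really do present each vertex with no outgoing edges at the moment it is processed.
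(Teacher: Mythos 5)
Your proposal is correct and follows essentially the same route as the paper's own proof: induction on $n$ with the $n=3$ base case, Lemma~\ref{lemma:allEdges} to cover the full index range, and Lemmas~\ref{lemma:neverInfer3} and~\ref{lemma:neverInfer2} to dispose of the edges added on lines~13 and~18. Your explicit observation that during the recursive phase no new vertex lies on any path (so no new edge can be inferred) and your remarks on the stability of the vertex-elimination conditions merely spell out steps the paper leaves implicit.
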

\begin{proof}
Let $\mathcal{E}_i = \{e_1,...,e_i\}$ where $\{e_1,...,e_i\}$ is the first $i$ triplets in the ordered set returned by Alg. \ref{alg:adversarialStrategy}.  First, we need to ensure $\mathcal{E}_i$ can be constructed for all $i \in\{1, ..., |\mathcal{E}^{total}|\}$.  Lemma \ref{lemma:allEdges} states that Alg. \ref{alg:adversarialStrategy} gives all edges for a given $n$, thus proving this.  Lastly, we need to show that Alg. \ref{alg:adversarialStrategy} never adds an edge to the sequence of edges that will infer an ungiven edge. To prove this, we use induction.  The base case is $n = 3$, which is the fewest number of objects for which triplets can be defined.  The base case is defined on lines 2-6 of Alg. \ref{alg:adversarialStrategy}.  The edges $e_1$ and $e_2$ do not infer the third and final edge $e_3$, thus Prop. \ref{prop:neverInfer} is true for the case $n = 3$.\\

Line 9 in Alg. \ref{alg:adversarialStrategy} returns the solution for $n-1$.  Thus, if Alg. \ref{alg:adversarialStrategy} never gives an edge that can infer an ungiven edge after this point in the algorithm, the inductive step is proven.  The only two lines after line 9 that add edges are lines 13 and 18.  By Lem. \ref{lemma:neverInfer3}, line 13 never adds an edge that can infer an ungiven edge and by Lem. \ref{lemma:neverInfer2} neither does line 18.  As a result the inductive step and thus Prop. \ref{prop:neverInfer} is proven. 
\end{proof}

Finally, with these propositions we can prove Thm. \ref{thm:numTriplets} from Section \ref{sec:theory}:
\begin{proof}

By Prop. \ref{prop:neverInfer}, there exists an adversarial strategy such that, for all $i \in \{1,...,|\mathcal{E}^{total}|\}$, $\mathcal{E}^{trans}_i \backslash \mathcal{E}_i = \emptyset$.  By construction of the directed acyclic graphs representing sets of triplets we can construct sets $\mathcal{T}_i$ from $\mathcal{E}_i$, $\mathcal{T}^{trans}_i$ from $\mathcal{E}^{trans}_i$, and  $\mathcal{T}^{total}$ from $\mathcal{E}^{total}$.  Thus we can state, $\forall_{i=1,...,|\mathcal{T}^{total}|} : \mathcal{T}^{trans}_i \backslash \mathcal{T}_i = \emptyset$ for the worst case defined by Alg. \ref{alg:adversarialStrategy}.
\end{proof}


\subsection{Theorem 3}
\label{sec:appTheorem3}
We begin by stating that the RCKL problem is equivalent to learning an embedding of objects in a space that satisfies distance constraints imposed by the triplets $\mathcal{T}$.  Limiting the comparisons to triplets makes this embedding problem a special case of non-metric multidimensional scaling (NMDS).  Also, the rank of the learned kernel $\mathbf{K}$ is equivalent to the rank of the learned embedding (i.e. if $\mathbf{K} = \mathbf{A}\mathbf{A}^T$, then $\mathrm{rank}(\mathbf{K}) = \mathrm{rank}(\mathbf{A})$.  As a result, enforcing $\mathrm{rank}(\mathbf{K}) \leq r$ is equivalent to enforcing that the objects be embedded in $\mathbb{R}^d$ where $d \leq r$.  With this in mind we state the following proposition from Appendix A of \cite{mcfee2011learning}:

\begin{proposition}
Any set of objects $\mathcal{X}$ with a partial order of distances $\mathcal{C}$ can be embedded in $\mathbb{R}^{n-1}$
\label{prop:embedding}
\end{proposition}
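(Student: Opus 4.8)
The plan is to split the claim into two independent parts. The first is that \emph{any} valid matrix of squared Euclidean distances on $n$ points is automatically realizable in $\mathbb{R}^{n-1}$, which is classical (Schoenberg's criterion, i.e.\ classical multidimensional scaling): writing $\mathbf{J} := \mathbf{I} - \tfrac1n\mathbf{1}\mathbf{1}^{\top}$ for the centering matrix, a symmetric $\mathbf{D}$ with zero diagonal is a squared-distance matrix iff $\mathbf{G} := -\tfrac12\mathbf{J}\mathbf{D}\mathbf{J} \succeq 0$, and then the rows of any factorization $\mathbf{G} = \mathbf{Y}\mathbf{Y}^{\top}$ realize $\mathbf{D}$ in $\mathbb{R}^{\mathrm{rank}(\mathbf{G})}$. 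Since $\mathbf{J}$ has rank $n-1$ with kernel $\mathrm{span}(\mathbf{1})$, we have $\mathbf{G}\mathbf{1} = 0$ and hence $\mathrm{rank}(\mathbf{G}) \le n-1$; so every such $\mathbf{D}$ embeds in $\mathbb{R}^{n-1}$, and the associated kernel $\mathbf{K}$ (with $d_{\mathbf{K}}(x_a,x_b) = \mathbf{D}_{ab}$) has rank at most $n-1$. Thus it suffices to exhibit \emph{one} valid squared-distance matrix whose entries respect the partial order $\mathcal{C}$.

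For that I would first pass to the transitive closure of $\mathcal{C}$, which is a strict partial order on the set of $\binom{n}{2}$ unordered index pairs precisely because the underlying triplets are non-conflicting, and then extend it to a strict \emph{linear} order $p_1 \prec p_2 \prec \dots \prec p_m$ on all $m = \binom{n}{2}$ pairs (a linear extension of a finite strict partial order always exists). I would then set $\mathbf{D}(\epsilon)_{p_k} := 1 + k\epsilon$ on the off-diagonal and $0$ on the diagonal, for a small parameter $\epsilon > 0$. By construction the squared distances increase strictly along $\prec$, so any configuration realizing $\mathbf{D}(\epsilon)$ satisfies every comparison in $\mathcal{C}$; what remains is to check that $\mathbf{D}(\epsilon)$ is a genuine Euclidean squared-distance matrix for small $\epsilon$.

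This last check is a perturbation of the regular simplex. At $\epsilon = 0$ we have $\mathbf{D}(0) = \mathbf{1}\mathbf{1}^{\top} - \mathbf{I}$, and using $\mathbf{J}\mathbf{1} = 0$ and $\mathbf{J}^2 = \mathbf{J}$ one gets $\mathbf{G}(0) = -\tfrac12\mathbf{J}(\mathbf{1}\mathbf{1}^{\top}-\mathbf{I})\mathbf{J} = \tfrac12\mathbf{J}$, which is positive definite on $\mathbf{1}^{\perp}$ with every nonzero eigenvalue equal to $\tfrac12$. Since $\mathbf{D}(\epsilon)$ is affine in $\epsilon$ and $\mathbf{G}(\epsilon)\mathbf{1} = 0$ for all $\epsilon$, for any unit $u \in \mathbf{1}^{\perp}$ we have $u^{\top}\mathbf{G}(\epsilon)u \ge \tfrac12 - \|\mathbf{G}(\epsilon)-\mathbf{G}(0)\|_{\mathrm{op}} \ge \tfrac12 - c_n\epsilon$ for a constant $c_n$ depending only on $n$; hence $\mathbf{G}(\epsilon) \succeq 0$ (in fact of rank exactly $n-1$) whenever $\epsilon < 1/(2c_n)$. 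For such $\epsilon$, the first part realizes $\mathbf{D}(\epsilon)$ in $\mathbb{R}^{n-1}$, and that embedding respects $\mathcal{C}$, which proves the proposition.

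The main obstacle is precisely this perturbation step: positive semidefiniteness is not an open condition in general, so one cannot simply ``wiggle'' $\mathbf{D}(0)$ in arbitrary directions. It works here only because the single forced zero eigenvalue of $\mathbf{G}(\epsilon)$ always lies along the \emph{fixed} direction $\mathbf{1}$ --- a structural consequence of the centering by $\mathbf{J}$ --- while the remaining spectrum of $\mathbf{G}(0)$ is uniformly bounded away from zero, so that continuity of $u \mapsto u^{\top}\mathbf{G}(\epsilon)u$ on the unit sphere of $\mathbf{1}^{\perp}$ suffices. The other ingredients (the linear extension of a finite partial order, the identity $\mathbf{G}(0) = \tfrac12\mathbf{J}$, and the classical dimension bound $\mathrm{rank}(\mathbf{G}) \le n-1$) are routine.
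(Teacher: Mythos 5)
Your proof is correct. Note, though, that the paper does not actually prove Prop.~\ref{prop:embedding} itself: it imports the statement verbatim from Appendix~A of \cite{mcfee2011learning} and only uses it as a black box in the proof of Theorem~3. Your argument is therefore best compared with that cited construction, and it is the same kind of argument made fully explicit: reduce to exhibiting one admissible squared-distance matrix (via a linear extension of the non-conflicting comparison order over the $\binom{n}{2}$ pairs), take a small monotone perturbation of the regular-simplex distances, and certify Euclidean realizability through the double-centered Gram matrix $\mathbf{G}(\epsilon)=-\tfrac12\mathbf{J}\mathbf{D}(\epsilon)\mathbf{J}$. The two points you flag as the crux are exactly the right ones and are handled correctly: the forced null direction of $\mathbf{G}(\epsilon)$ is always $\mathbf{1}$ because of the centering, so positive semidefiniteness only needs to be checked on $\mathbf{1}^{\perp}$, where $\mathbf{G}(0)=\tfrac12\mathbf{J}$ has spectrum bounded below by $\tfrac12$ and the affine dependence on $\epsilon$ gives an explicit admissible range $\epsilon<1/(2c_n)$; and $\mathbf{G}\mathbf{1}=0$ gives $\mathrm{rank}(\mathbf{G})\le n-1$, hence the embedding dimension $n-1$. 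Your construction also yields strict satisfaction of every comparison (indeed with a quantifiable margin controlled by $\epsilon$), which is what Theorem~3's argument needs, and it applies to arbitrary pairwise-distance comparisons, not just triplets sharing a head object, so it is if anything slightly more general than what the paper requires. What the paper's citation buys is brevity; what your write-up buys is a self-contained, quantitative proof whose only external ingredients (Szpilrajn for finite orders and the Schoenberg/classical-MDS criterion) are standard.
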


When $\mathcal{C}$ is a total ordering over all pairs, the problem of embedding $\mathcal{X}$ in a space that respects the distances in $\mathcal{C}$ reduces to NMDS \cite{kruskal1964nonmetric}.  This implies that given any objects $\mathcal{X}$ and any non-conflicting set of triplets $\mathcal{T}$, the objects in $\mathcal{X}$ can be embedded in $\mathbb{R}^{n-1}$ and satisfy all triplets in $\mathcal{T}$.  Equivalently, given any set of objects $\mathcal{X}$ and any non-conflicting set of triplets $\mathcal{T}$, all triplets in $\mathcal{T}$ can be satisfied by a rank $n-1$ kernel. With this in mind we can prove Thm. \ref{thm:noRank} from Section \ref{sec:theory}:

\begin{proof}
Assume an RCKL method enforces $\mathrm{rank}(\mathbf{K}) = r$.  By Prop. \ref{prop:embedding}, $r$ must be less than $n-1$ to infer any triplets.  Without loss of generality let $t = (a,b,c) \in \mathcal{T}^{rank-r}$.  Because, $t \notin \mathcal{T}^{trans}$ by definition of $\mathcal{T}^{rank-r}$, $\mathcal{T} \cup (a,c,b)$ does not cause a conflict.  An adversary can construct $\mathcal{T}^{total}$ with rank $r^{total} > r$,  such that $(a,c,b) \in \mathcal{T}^{total}$, because any non-conflicting set of triplets $\mathcal{T}^{total}$ can be satisfied by some choice of $r^{total} \leq n-1$ from Prop. \ref{prop:embedding}.  By virtue of the fact that $\mathcal{T}^{total}$ contains no conflicting triplets, we can then deduce that $(a,b,c) \notin \mathcal{T}^{total}$.  Trivially, this can be said of any rank less than $r$, as well.  Thus, any triplet $t \in \mathcal{T}^{rank-r}$ is not an element in $\mathcal{T}^{total}$ through adversarial choice of $\mathcal{T}^{total}$, proving Thm. 3.
\end{proof}

\section{Proofs of Propositions}
The strategy employed throughout this section to prove the stated functions are convex is to build each using convex combinations of convex functions.  In order to use this strategy, we need to establish the following Lems. (Above each Lemma is a reference to a source for each lemma, respectively). \\

\noindent Section 3.2.1 of \cite{boyd2004convex}:
\vspace{-5 pt}
\begin{lemma}
  If $f$ and $g$ are both convex functions, then so is their sum $f$ + $g$.
  \label{lemma:sumConvex} 
\end{lemma}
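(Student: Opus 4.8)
The plan is to work directly from the definition of convexity, since the statement is an elementary closure property. Recall that a function $h$ on a convex domain is convex if for all $x,y$ in its domain and all $\theta \in [0,1]$ we have $h(\theta x + (1-\theta)y) \le \theta h(x) + (1-\theta) h(y)$. So first I would fix arbitrary $x,y$ in the common domain of $f$ and $g$ (which is itself convex, being the intersection of two convex sets) and an arbitrary $\theta \in [0,1]$, and write down this defining inequality separately for $f$ and for $g$.

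Next I would add the two inequalities term by term. The left-hand sides sum to $f(\theta x + (1-\theta)y) + g(\theta x + (1-\theta)y) = (f+g)(\theta x + (1-\theta)y)$, and the right-hand sides sum to $\theta\bigl(f(x)+g(x)\bigr) + (1-\theta)\bigl(f(y)+g(y)\bigr) = \theta (f+g)(x) + (1-\theta)(f+g)(y)$. Since inequalities are preserved under addition, this yields exactly the convexity inequality for $f+g$, and since $x,y,\theta$ were arbitrary, $f+g$ is convex.

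There is no real obstacle here; the only thing to be mildly careful about is the domain bookkeeping (noting that $\mathrm{dom}(f)\cap\mathrm{dom}(g)$ is convex so the inequality is being asserted on a legitimate convex set), but in the present paper all the functions in question share the same convex domain, so even that is immediate. I would keep the write-up to the two-line chained computation above.
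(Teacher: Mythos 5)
Your proof is correct: adding the two defining (Jensen) inequalities for $f$ and $g$ at a common point $\theta x + (1-\theta)y$ yields exactly the convexity inequality for $f+g$, and your remark about the domain being the (convex) intersection of the two domains is the only bookkeeping needed. The paper itself does not prove this lemma at all --- it simply cites Section 3.2.1 of \cite{boyd2004convex} --- and your argument is precisely the standard definitional proof given in that reference, so there is nothing to reconcile.
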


\noindent Section 2.3.2 of \cite{boyd2004convex}:
\vspace{-5 pt}
\begin{lemma}
  Affine functions of the form $f(\mathbf{x}) = \mathbf{Ax} + \mathbf{b}$, where $\mathbf{A} \in \mathbb{R}^{m\mathrm{x}n}$, $\mathbf{x} \in \mathbb{R}^n$, and $\mathbf{b} \in \mathbb{R}^m$ are convex in $\mathbf{x}$ 
\label{lemma:affineMap}
\end{lemma}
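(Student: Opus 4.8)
This is a standard fact (stated in Section 2.3.2 of \cite{boyd2004convex}); the plan is simply to verify the defining inequality of convexity directly, exploiting that the map $\mathbf{x} \mapsto \mathbf{Ax}$ is linear. First I would fix arbitrary points $\mathbf{x}, \mathbf{y} \in \mathbb{R}^n$ and a scalar $\theta \in [0,1]$ and expand $f(\theta\mathbf{x} + (1-\theta)\mathbf{y})$. By distributivity of matrix--vector multiplication over vector addition together with its scalar-homogeneity, this equals $\theta\mathbf{Ax} + (1-\theta)\mathbf{Ay} + \mathbf{b}$.

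The key (and only non-mechanical) step is to rewrite the constant term as $\mathbf{b} = \theta\mathbf{b} + (1-\theta)\mathbf{b}$ and regroup, which gives $\theta(\mathbf{Ax}+\mathbf{b}) + (1-\theta)(\mathbf{Ay}+\mathbf{b}) = \theta f(\mathbf{x}) + (1-\theta) f(\mathbf{y})$. Hence $f(\theta\mathbf{x}+(1-\theta)\mathbf{y}) = \theta f(\mathbf{x}) + (1-\theta) f(\mathbf{y})$, so the convexity inequality in fact holds with equality (in particular $f$ is both convex and concave). When $m > 1$ the inequality is to be read coordinatewise, i.e. each component $\mathbf{x} \mapsto (\mathbf{Ax}+\mathbf{b})_i$ is a convex real-valued function; the computation above applies verbatim to each coordinate, so the vector-valued case introduces no additional work.

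There is essentially no obstacle here; the only point worth flagging is the convention that ``$f$ convex'' for a vector-valued $f$ means coordinatewise convexity, which is precisely the notion used everywhere this lemma is later invoked (e.g. when arguing that $d_{\mathbf{K}''}(x_a,x_b) = \mathbf{K}''_{aa} + \mathbf{K}''_{bb} - 2\mathbf{K}''_{ab}$ is an affine, hence convex, function of the variables $(\mathbf{K}_0, \mathbfup{\mu})$), so Lemma \ref{lemma:affineMap} in this form suffices for the subsequent convexity proofs.
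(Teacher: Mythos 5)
Your proof is correct: the paper itself offers no proof of this lemma, citing only Section 2.3.2 of \cite{boyd2004convex}, and your direct verification that $f(\theta\mathbf{x}+(1-\theta)\mathbf{y}) = \theta f(\mathbf{x}) + (1-\theta) f(\mathbf{y})$ is exactly the standard argument behind that citation. Your remark that vector-valued convexity is read coordinatewise (and that the later uses of the lemma, e.g.\ for $\mathbf{k}_{abc}^T\mathbfup{\mu}$ and the entries of $\mathbf{K}_0$, are in fact scalar-valued affine maps) is accurate and resolves the only ambiguity in the statement.
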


\noindent Section 3.2.3 of \cite{boyd2004convex}:
\vspace{-5 pt}
\begin{lemma}
  If $f$ and $g$ are convex functions, then their point-wise maximum, $\max\left(f(x), g(x)\right)$, is also convex.
\label{lemma:max}
\end{lemma}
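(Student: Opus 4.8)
\textbf{Proof proposal for Lemma \ref{lemma:max}.} The plan is to verify the defining inequality of convexity for $h(x) = \max\left(f(x), g(x)\right)$ directly, using the convexity of $f$ and $g$ together with an elementary monotonicity/subadditivity property of the $\max$ operation. Fix two points $x, y$ in the (common, convex) domain and a scalar $\theta \in [0,1]$. We want to show $h\left(\theta x + (1-\theta)y\right) \leq \theta h(x) + (1-\theta) h(y)$.

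First I would bound each branch inside the $\max$ separately: by convexity of $f$, $f\left(\theta x + (1-\theta)y\right) \leq \theta f(x) + (1-\theta)f(y)$, and similarly for $g$. Since $a \leq a'$ and $b \leq b'$ imply $\max(a,b) \leq \max(a',b')$, this gives
\[
h\left(\theta x + (1-\theta)y\right) \leq \max\left(\theta f(x) + (1-\theta)f(y),\ \theta g(x) + (1-\theta)g(y)\right).
\]
The next step is the only one requiring a small observation: for any reals, $\max(p+q,\ r+s) \leq \max(p,r) + \max(q,s)$, because $p+q \leq \max(p,r)+\max(q,s)$ and $r+s \leq \max(p,r)+\max(q,s)$. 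Applying this with $p = \theta f(x)$, $r = \theta g(x)$, $q = (1-\theta)f(y)$, $s = (1-\theta)g(y)$, and pulling the nonnegative factors $\theta$ and $1-\theta$ out of the respective maxima, yields
\[
h\left(\theta x + (1-\theta)y\right) \leq \theta \max\left(f(x), g(x)\right) + (1-\theta)\max\left(f(y), g(y)\right) = \theta h(x) + (1-\theta) h(y),
\]
which is exactly the convexity condition for $h$.

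I do not anticipate a genuine obstacle here; the one point that needs care rather than routine algebra is justifying the subadditivity step $\max(p+q, r+s) \leq \max(p,r) + \max(q,s)$ and the fact that the factors $\theta, 1-\theta \geq 0$ may be moved through $\max$ without changing direction of the inequality. An alternative, even shorter route would be to invoke the epigraph characterization of convexity — namely that $\operatorname{epi} h = \operatorname{epi} f \cap \operatorname{epi} g$ is an intersection of convex sets, hence convex — but since the surrounding appendix builds everything up from first principles, the direct verification above is the cleaner fit.
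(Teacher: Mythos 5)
Your proof is correct: the monotonicity of $\max$, the subadditivity step $\max(p+q,r+s)\leq\max(p,r)+\max(q,s)$, and pulling out the nonnegative factors $\theta$ and $1-\theta$ are all valid, and together they give exactly the convexity inequality for $h=\max(f,g)$. The paper itself does not prove this lemma but simply cites Section 3.2.3 of \cite{boyd2004convex}, and the argument given there is essentially the same direct verification you carry out (Boyd bounds each branch by $\theta\max(f(x),g(x))+(1-\theta)\max(f(y),g(y))$ before taking the maximum, which is your two middle steps merged), so there is nothing further to reconcile.
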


\noindent In addition, we will use the concept of \emph{logarithmic convexity}:

\begin{definition}
  A function $f$ is \emph{logarithmically convex (log-convex)} if $f(x) > 0$  for all $x \in \mathbf{dom} f$ and $\log f$ is convex.
\label{def:logconvex} 
\end{definition}

\noindent Which we then use to state the following Lemma. \\

\noindent Section 3.5.2 of \cite{boyd2004convex}:
\vspace{-5 pt}
\begin{lemma}
  If $f$ and $g$ are both log-convex functions, then so is their sum $f$ + $g$.
  \label{lemma:sumlog}
\end{lemma}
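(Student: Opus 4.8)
The plan is to avoid manipulating $\log(f+g)$ directly and to work instead with the equivalent multiplicative form of log-convexity: a positive function $h$ on a convex set is log-convex precisely when $h(\theta x+(1-\theta)y)\le h(x)^{\theta}h(y)^{1-\theta}$ for all $x,y$ in its domain and all $\theta\in[0,1]$, this being just Jensen's inequality applied to $\log h$. Since $f>0$ and $g>0$ force $f+g>0$, it suffices to verify this multiplicative inequality for $h=f+g$.

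First I would apply the multiplicative inequality to $f$ and to $g$ separately and add the two resulting bounds, obtaining
\[
(f+g)(\theta x+(1-\theta)y)\ \le\ f(x)^{\theta}f(y)^{1-\theta}+g(x)^{\theta}g(y)^{1-\theta}.
\]
The remaining step is to show that the right-hand side is at most $\big(f(x)+g(x)\big)^{\theta}\big(f(y)+g(y)\big)^{1-\theta}$. This is exactly the two-term case of H\"older's inequality with conjugate exponents $1/\theta$ and $1/(1-\theta)$; alternatively it follows in one line by dividing through by $\big(f(x)+g(x)\big)^{\theta}\big(f(y)+g(y)\big)^{1-\theta}$ and applying the weighted arithmetic--geometric mean inequality to each of the two resulting terms, then summing (the weights $\theta$ and $1-\theta$ make the sum collapse to $1$). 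Chaining the two displayed inequalities then shows that $f+g$ satisfies the multiplicative characterization and is therefore log-convex; the boundary cases $\theta\in\{0,1\}$ are immediate.

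I expect the H\"older/AM--GM step to be the only part of the argument with real content, since everything else is bookkeeping around the definition of log-convexity and does not touch Lems.~\ref{lemma:sumConvex}--\ref{lemma:max}. The practical choice is whether to cite H\"older's inequality directly --- consistent with how the companion lemmas in this section defer to \cite{boyd2004convex} --- or to include the short normalization-plus-AM--GM proof; I would include the latter to keep the lemma self-contained.
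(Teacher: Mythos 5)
Your proof is correct: the multiplicative characterization of log-convexity plus the two-term H\"older (equivalently, normalization and weighted AM--GM) step is exactly what is needed, and the boundary cases $\theta\in\{0,1\}$ are indeed trivial. Note that the paper itself gives no proof of this lemma --- it simply defers to Section 3.5.2 of \cite{boyd2004convex} --- and your argument is essentially the standard one found there, so there is nothing to reconcile.
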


\noindent Finally, for the sake of notational brevity, let us define the following:

\begin{align}
  d^{ab}_{\mathbf{K}} &= d_{\mathbf{K}}(\mathit{x}_a, \mathit{x}_b) \nonumber \\
  D^{abc}_{\mathbf{K}} &= d_{\mathbf{K}}(\mathit{x}_a, \mathit{x}_b) - d_{\mathbf{K}}(\mathit{x}_a, \mathit{x}_c)\nonumber
\end{align}

\noindent These short-hand versions of our established notation will be used throughout this section.

\subsection{Proposition 1}
\label{sec:appProposition1}
Proof of Prop. 1:
\begin{proof}
~\begin{enumerate}
\item In order for \eqref{eq:RCKL-AK} to be a convex optimization problem, it's objective and constraints must be convex in the optimization variables.
\item By Lem. \ref{lemma:sumConvex}, if $E(\mathbf{K}'', \mathcal{T})$, $\lambda_1\mathrm{trace}(\mathbf{K}_0)$, and $\lambda_2\|\mathbfup{\mu}\|_1$ are convex, then the objective in \eqref{eq:RCKL-AK} is convex.
\item It is an assumption of the proposition that $E(\mathbf{K}'', \mathcal{T})$ is convex.
\item $\lambda_1\mathrm{trace}(\mathbf{K}_0)$ is defined as a constant times the sum of the diagonal elements of the matrix $\mathbf{K}_0$, which is a sum of convex functions (Lem. \ref{lemma:sumConvex}).
\item $\lambda_2\|\mathbfup{\mu}\|_1$ is defined as a constant times the the sum of the absolute values of the elements of $\mathbfup{\mu}$, which is a sum of convex functions (Lem. \ref{lemma:sumConvex})
\item By lines 2, 3, 4, and 5, the objective in \eqref{eq:RCKL-AK} is convex.
\item The positivity constraint on $\mathbfup{\mu}$ is trivially convex.
\item The positive semidefinite constraint is known to be convex \cite{vandenberghe1996semidefinite}.
\item By lines 7 and 8, both constraints of \eqref{eq:RCKL-AK} are convex.
\item By lines 1, 6, and 9, \eqref{eq:RCKL-AK} is a convex optimization problem.
\end{enumerate}
\end{proof}

\subsection{Proposition 2}
\label{sec:prop2}
Proof of Prop. 2:
\begin{proof}
~\begin{enumerate}
\item Moving the negation into the sum, \eqref{eq:STE-AK} becomes the sum of terms of the following form:

  \begin{align}
    & \hspace{5 pt}-\log\left(p_{abc}^{\mathbf{K}''}\right)  \label{eq:negSum}\\
    = &  \hspace{5 pt}-\log\left(\frac{\mathrm{exp}\left(-d^{ab}_{\mathbf{K}''}\right)}{\mathrm{exp}\left(-d^{ac}_{\mathbf{K}''}\right) + \mathrm{exp}\left(-d^{ab}_{\mathbf{K}''}\right)}\right) \nonumber \\ 
    = & \hspace{5 pt}-\log\left(\frac{1}{1 + \mathrm{exp}\left(D^{abc}_{\mathbf{K}''}\right)}\right) \nonumber \\ 
    = & \hspace{5 pt}-\log(1) \label{eq:1st}\\
    + & \hspace{5 pt}\log\left(\mathrm{exp}\left(D^{abc}_{\mathbf{K}''}\right) + 1\right) \label{eq:2nd}
  \end{align}

\item By Lem. \ref{lemma:sumConvex}, if \eqref{eq:negSum} is convex for all triplets $(a,b,c)$, then \eqref{eq:STE-AK} is convex.
  \item By Lem. \ref{lemma:sumConvex}, \eqref{eq:negSum} is convex if both \eqref{eq:1st} and \eqref{eq:2nd} are convex.
  \item \eqref{eq:1st} is a constant and trivially convex.
  \item By Definition \ref{def:logconvex}, if $\mathrm{exp}\left(D^{abc}_{\mathbf{K}''}\right) + 1$ is log-convex, then \eqref{eq:2nd} is convex.
  \item By Lem. \ref{lemma:sumlog} if $\mathrm{exp}\left(D^{abc}_{\mathbf{K}''}\right)$ and 1 are both log-convex, then $\mathrm{exp}\left(D^{abc}_{\mathbf{K}''}\right) + 1$ is log-convex.
  \item 1 is a constant and trivially log-convex.
  \item The codomain of the exponential function is $\mathbb{R}^+$, so $\mathrm{exp}\left(D^{abc}_{\mathbf{K}''}\right) > 0$ for all $\mathbf{K}''$, which satisfies the first condition for log-convexity.
  \item To show $\log\left(\mathrm{exp}\left(D^{abc}_{\mathbf{K}''}\right)\right)$ is convex, thus satisfying the second condition of log-convexity, we start by stating the following equivalence by using the definition of $\mathbf{K}''$ \eqref{eq:akcomb}:
%
\begin{flalign}
  \log\left(\mathrm{exp}\left(D^{abc}_{\mathbf{K}''}\right)\right) & =  D^{abc}_{\mathbf{K}''} \nonumber \\
  & = D^{abc}_{\mathbf{K}_0} + \sum_{i=1}^A \mu_i D^{abc}_{\mathbf{K}_i} \label{eq:kDists}
\end{flalign}
  \item By Lem. \ref{lemma:sumConvex} if $D^{abc}_{\mathbf{K}_0}$ and $\sum_{i=1}^A \mu_i D^{abc}_{\mathbf{K}_i}$ are convex, then \eqref{eq:kDists} is convex.
  \item Let $\mathbf{k}_{abc} = \left(D^{abc}_{\mathbf{K}_1},...,D^{abc}_{\mathbf{K}_A}\right)$.  Then, $\sum_{i=1}^A \mu_i D^{abc}_{\mathbf{K}_i} = \mathbf{k}_{abc}^T\mathbfup{\mu}$.
   \item  $\mathbf{k}_{abc}^T\mathbfup{\mu}$ is an affine function of $\mathbfup{\mu}$ that has the form $f(\mathbfup{\mu}) = \mathbf{A}\mathbfup{\mu} + \mathbf{b}$ where $\mathbf{A} = \mathbf{k}_{abc}^T$ and $\mathbf{b}$ is 0. 
   \item By Lem. \ref{lemma:affineMap} and the previous step, $\mathbf{k}_{abc}^T\mathbfup{\mu}$ is convex in $\mathbfup{\mu}$.
   \item Using the definition of kernel distance from \eqref{eq:KConstraints} (Note the slight change in notation: $\mathbf{K}_0^{ab}$ refers to the $a$th column and $b$th row of $\mathbf{K}_0$):
 \begin{align}
    D^{abc}_{\mathbf{K}_0} &= \mathbf{K}_0^{bb} + 2\mathbf{K}_0^{ac} - \mathbf{K}_0^{cc} - 2\mathbf{K}_0^{ab}
    \label{eq:Kdist}
 \end{align}
\item By Lem. \ref{lemma:sumConvex}, if the individual terms of \eqref{eq:Kdist} are convex then \eqref{eq:Kdist} is convex.

\item The individual terms of \eqref{eq:Kdist} are simply elements of $\mathbf{K}_0$ multiplied by scalers, which are convex in $\mathbf{K}_0$.
\item By lines 3-16,  \eqref{eq:negSum} is convex.
\item By lines 1, 2, and 17, \eqref{eq:STE-AK} is convex.
\end{enumerate}
\end{proof}

\subsection{Proposition 3}
\label{sec:appProposition3}
Proof of Prop. 3:
\begin{proof}
~\begin{enumerate}
\item By Lem. \ref{lemma:sumConvex} if $\max(0, D^{abc}_{\mathbf{K}''} + 1)$ is convex for any triplet $(a,b,c)$, then \eqref{eq:GNMDS-AK-Error} is convex.
\item By Lem. \ref{lemma:max}, if 0 and $D^{abc}_{\mathbf{K}''} + 1$ are convex, then $\max(0, D^{abc}_{\mathbf{K}''} + 1)$ is convex.
\item 0 is trivially convex.
\item By Lem. \ref{lemma:sumConvex}, if 1 and $D^{abc}_{\mathbf{K}''}$ are convex, then $D^{abc}_{\mathbf{K}''} + 1$ is convex.
\item 1 is trivially convex.
\item Steps 9-16 of Section \ref{sec:prop2} showed $D^{abc}_{\mathbf{K}''}$ is convex in the optimization variables.
\item By lines 2-6, $\max(0, D^{abc}_{\mathbf{K}''} + 1)$ is convex for any triplet $(a,b,c)$.
\item By line 1 and 6, \eqref{eq:GNMDS-AK-Error} is convex
\end{enumerate}
\end{proof}

\section{Auxiliary Kernel Algorithms}
\label{sec:appAKA}
In this section we state and discuss the algorithms used to solve STE-AK and GNMDS-AK.  They share many steps, so we begin stating STE-AK and discussing it in detail, and then state GNMDS-AK and highlight how it differs from STE-AK.
\subsection{STE-AK}

After initialization, Alg. \ref{alg:STE-AK} repeats the following steps until convergence:
\begin{enumerate}
    \item \textbf{Line 6:} Take a gradient step for $\mathbf{K}_0$ (trace regularization included)
    \item \textbf{Line 7:} Take a gradient step for $\mathbfup{\mu}$ ($\ell_1$-norm regularization included)
    \item \textbf{Line 8:} Project $\mathbf{K}_0$ onto the positive semidefinite cone
    \item \textbf{Line 9:} Project the elements of $\mathbfup{\mu}$ to be non-negative
    \item \textbf{Line 10:} Update $\mathbf{K}''$
\end{enumerate}

\begin{algorithm}
\center \caption{STE-AK Projected Gradient Descent}
\begin{algorithmic}[1]

\Require
\Statex $\mathcal{X} = \{\mathit{x}_1, ..., \mathit{x}_n\}$,
\Statex $\mathcal{T} = \{(a,b,c) | \mathit{x}_a \mathrm{\ is\ more\ similar\ to\ }\mathit{x}_b \mathrm{\ than\ }\mathit{x}_c\}$,
\Statex $\mathbf{K}_1,...,\mathbf{K}_A \in \mathbb{R}^{n \times n}$, $\lambda_1 \in \mathbb{R}^{+}$, $\lambda_2 \in \mathbb{R}^{+}$, $\eta \in \mathbb{R}^{+}$
\Output 
\Statex $\mathbf{K}'' \in \mathbb{R}^{n \times n}$
\Statex
\State $t \gets 0$
\State $\mathbf{K}^0_0 \gets \mathbf{I}^{n \times n}$
\State $\mu_1^0,...,\mu_A^0 \gets \frac{1}{A}$
\State $\mathbf{K}'' \gets \mathbf{K}^0_0 + \sum_{a=1}^A\mu_a^0\mathbf{K}_a$
\Repeat
    \State $\mathbf{K}^{t+1}_0 \gets \mathbf{K}^t - \eta * (\nabla_{\mathbf{K}^t}E_{\mathrm{STE}}(\mathbf{K}'', \mathcal{T})  + \lambda_1 * \mathbf{I}^{n \times n})$
    \State $\mathbfup{\mu}^{t+1} \gets \mathbfup{\mu}^t - \eta * (\nabla_{\mathbfup{\mu}^t}E_{\mathrm{STE}}(\mathbf{K}'', \mathcal{T}) + \mathbfup{\lambda}_2)$
    \State $\mathbf{K}^{t+1}_0 \gets \Pi_{PSD}(\mathbf{K}^{t+1}_0)$
    \State $\mathbfup{\mu}^{t+1} \gets \Pi_+(\mathbfup{\mu}^{t+1})$    
    \State $\mathbf{K}'' \gets \mathbf{K}^{t+1}_0 + \sum_{a=1}^A\mu_a^{t+1}\mathbf{K}_a$
    \State $t \gets t + 1$ 
\Until{convergence}
\end{algorithmic}
\label{alg:STE-AK}
\end{algorithm}

Projection onto the positive semi-definite cone is done by performing eigendecomposition of the matrix $\mathbf{K}_0$, assigning all negative eigenvalues to zero, and then reassembling $\mathbf{K}_0$ from the original eigenvectors and the new eigenvalues \cite{schwertman1979smoothing}. Projection of the elements of $\mathbfup{\mu}$ to be non-negative is simply done by assigning all negative elements to be zero. The $\ell_1$-norm regularization in Alg. \ref{alg:STE-AK} is performed by adding $\mathbfup{\lambda}_2 = \lambda_2 * \mathbfup{1}^{A}$ to the gradient (Line 7).  Since $\mathbfup{\mu}$ is constrained to the non-negative orthant, the subgradient of the $\ell_1$-norm function needs only to be over the non-negative orthant, thus $\mathbfup{\lambda}_2$ is an acceptable subgradient.  Moreover, since we then project the elements of $\mathbfup{\mu}$ to be non-negative, we get the desired effect of the $\ell_1$-norm regularization: the reduction of some elements to be exactly zero.  

\subsection{GNMDS-AK}
\begin{algorithm}
\center \caption{GNMDS-AK Projected Gradient Descent}

\begin{algorithmic}[1]
\Require
\Statex $\mathcal{X} = \{\mathit{x}_1, ..., \mathit{x}_n\}$,
\Statex $\mathcal{T} = \{(a,b,c) | \mathit{x}_a \mathrm{\ is\ more\ similar\ to\ }\mathit{x}_b \mathrm{\ than\ }\mathit{x}_c\}$,
\Statex $\mathbf{K}_1,...,\mathbf{K}_A \in \mathbb{R}^{n \times n}$, $\lambda_1 \in \mathbb{R}^{+}$, $\lambda_2 \in \mathbb{R}^{+}$, $\eta \in \mathbb{R}^{+}$
\Output 
\Statex $\mathbf{K}'' \in \mathbb{R}^{n \times n}$
\Statex
\State $t \gets 0$
\State $\mathbf{K}^0_0 \gets \mathbf{I}^{n \times n}$
\State $\mu_1^0,...,\mu_A^0 \gets \frac{1}{A}$
\State $\mathbf{K}'' \gets \mathbf{K}^0_0 + \sum_{a=1}^A\mu_a^0\mathbf{K}_a$
\Repeat
    \State $\mathcal{T}' \gets getActiveTriplets\left(\mathcal{T},\mathbf{K}''\right)$ 
    \State $\mathbf{K}^{t+1}_0 \gets \mathbf{K}^t - \eta * (\nabla_{\mathbf{K}^t}E_{\mathrm{GNMDS}}^{'}(\mathbf{K}'', \mathcal{T}')  + \lambda_1 * \mathbf{I}^{n \times n})$
    \State $\mathbfup{\mu}^{t+1} \gets \mathbfup{\mu}^t - \eta * (\nabla_{\mathbfup{\mu}^t}E_{\mathrm{GNMDS}}^{'}(\mathbf{K}'', \mathcal{T}') + \mathbfup{\lambda}_2)$
    \State $\mathbf{K}^{t+1}_0 \gets \Pi_{PSD}(\mathbf{K}^{t+1}_0)$
    \State $\mathbfup{\mu}^{t+1} \gets \Pi_+(\mathbfup{\mu}^{t+1})$    
    \State $\mathbf{K}'' \gets \mathbf{K}^{t+1}_0 + \sum_{a=1}^A\mu_a^{t+1}\mathbf{K}_a$
    \State $t \gets t + 1$ 
\Until{convergence}
\end{algorithmic}
\label{alg:GNMDS-AK}
\end{algorithm}

The differences between Algs. \ref{alg:STE-AK} and \ref{alg:GNMDS-AK} are in lines 6, 7, and 8. 

\begin{enumerate}
    \item \textbf{Line 6:} The function $getActiveTriplets$ checks each triplet in $\mathcal{T}$ to see if it violates the margin constraint $d_{\mathbf{K}''}(\mathit{x}_a, \mathit{x}_c) - d_{\mathbf{K}''}(\mathit{x}_a, \mathit{x}_b) < 1$.  The set $\mathcal{T}'$ is assigned to be the set containing only triplets that violate this constraint.  What this effectively does is form a set of only the triplets that would contribute to the error $E_{\mathrm{GNMDS}}$, because the point-wise maximum would be non-zero.
    \item \textbf{Lines 7 and 8:} These two lines are similar to lines 6 and 7 in Alg. \ref{alg:GNMDS-AK}, respectively. The first difference is in the error function  $E_{\mathrm{GNMDS}}^{'}$:

\begin{equation*}
E_{\mathrm{GNMDS}}^{'}\left(\mathbf{K}, \mathcal{T}\right) = \hspace{-10 pt}\displaystyle \sum_{(a,b,c) \in \mathcal{T}} \hspace{-10 pt} d_{\mathbf{K}}(\mathit{x}_a, \mathit{x}_b) - d_{\mathbf{K}}(\mathit{x}_a, \mathit{x}_c) + 1 
\end{equation*}

This is simply the sum over the triplets of the second argument in the point-wise maximum of $E_{\mathrm{GNMDS}}$.  Because the second argument of $E_{\mathrm{GNMDS}}^{'}$ in these lines is the set $\mathcal{T}'$, $E_{\mathrm{GNMDS}}^{'}\left(\mathbf{K}'', \mathcal{T}'\right)$ is equivalent to $E_{\mathrm{GNMDS}}\left(\mathbf{K}'', \mathcal{T}\right)$  for all possible triplets, but is differentiable everywhere.
\end{enumerate}

\section{Derivation of Equation \eqref{eq:metricMap}} 
\label{sec:appDerivation}
We begin by stating the definition of the distance metric learned in Equation (5) of \cite{wang2011metric}:

\begin{equation}
  \begin{array}{ll}
    d^2_{\mathbf{A},\mathbfup{\mu}}(x_i, x_j) \hspace{-5pt}& =  \displaystyle \sum_{l=1}^{A}\mu_l\left(\mathbf{K}_l^i - \mathbf{K}_l^j\right)^T \mathbf{A} \sum_{l=1}^{A}\mu_l\left(\mathbf{K}_l^i - \mathbf{K}_l^j\right)  \\ 
    \multicolumn{2}{l}{= \left(\mathbfup{\Phi}_{\mathbfup{\mu}}\left(x_i\right) - \mathbfup{\Phi}_{\mathbfup{\mu}}\left(x_j\right)\right)^T\mathbfup{\Phi}_{\mathbfup{\mu}}\left(\mathcal{X}\right)^T\mathbf{A}\mathbfup{\Phi}_{\mathbfup{\mu}}\left(\mathcal{X}\right)\left(\mathbfup{\Phi}_{\mathbfup{\mu}}\left(x_i\right) - \mathbfup{\Phi}_{\mathbfup{\mu}}\left(x_j\right)\right)} 
  \end{array}
\label{eq:wangMetric}
\end{equation}

Here, $\mathbf{K}_l^i$ is the $i$th row of the $l$th auxiliary kernel, and $\mathbf{A} \in \mathbb{R}^{n\mathrm{x}n}$.  Also:

\begin{equation}
  \mathbfup{\Phi}_{\mathbfup{\mu}}\left(x\right) = \left[\sqrt{\mu_1}\mathbfup{\Phi}_{1}\left(x\right),...,\sqrt{\mu_A}\mathbfup{\Phi}_{A}\left(x\right)\right]
\label{eq:wangMap}
\end{equation}

Finally, $\mathbfup{\Phi}_{\mathbfup{\mu}}\left(\mathcal{X}\right)$ is the matrix in which the rows are the mappings of the elements of $\mathcal{X}$ to $\mathcal{H}_{\mathbfup{\mu}}$ by $\mathbfup{\Phi}_{\mathbfup{\mu}}$.  Note the slight change in notation from \cite{wang2011metric}. The domain of their mapping $\mathbfup{\Phi}_{\mathbfup{\mu}}$ is over vector representations of the objects ($\mathbf{x}$).  In this work we assume that the mapping can exist over elements in a set ($x$) that do not necessarily need to be vectors.  By factoring the matrix $\mathbf{A} = \mathbf{B}^T\mathbf{B}$ and defining $\mathbf{\Omega} = \mathbf{B}\mathbfup{\Phi}_{\mathbfup{\mu}}\left(\mathcal{X}\right)$ we can distribute $\mathbf{\Omega}$ through \eqref{eq:wangMetric}:

\begin{equation}
  \begin{array}{ll}
    d^2_{\mathbf{A},\mathbfup{\mu}}(x_i, x_j) \hspace{-5pt} &  \\ 
    \multicolumn{2}{l}{= \left(\mathbfup{\Phi}_{\mathbfup{\mu}}\left(x_i\right) - \mathbfup{\Phi}_{\mathbfup{\mu}}\left(x_j\right)\right)^T\mathbfup{\Omega}^T\mathbfup{\Omega}\left(\mathbfup{\Phi}_{\mathbfup{\mu}}\left(x_i\right) - \mathbfup{\Phi}_{\mathbfup{\mu}}\left(x_j\right)\right)}  \\
    \multicolumn{2}{l}{= \left(\mathbfup{\Omega}\mathbfup{\Phi}_{\mathbfup{\mu}}\left(x_i\right) - \mathbfup{\Omega}\mathbfup{\Phi}_{\mathbfup{\mu}}\left(x_j\right)\right)^T\left(\mathbfup{\Omega}\mathbfup{\Phi}_{\mathbfup{\mu}}\left(x_i\right) - \mathbfup{\Omega}\mathbfup{\Phi}_{\mathbfup{\mu}}\left(x_j\right)\right)} \\
    \multicolumn{2}{l}{= d^2\left(\mathbfup{\Omega}\mathbfup{\Phi}_{\mathbfup{\mu}}\left(x_i\right), \mathbfup{\Omega}\mathbfup{\Phi}_{\mathbfup{\mu}}\left(x_j\right)\right)}
  \end{array}
\label{eq:wangmMetric}
\end{equation}

Where $d^2\left(\mathbf{x}_i, \mathbf{x}_j\right)$ is the squared Euclidean distance between $\mathbf{x}_i$ and $\mathbf{x}_j$.  In this form and by \eqref{eq:wangMap}, we can see that learning the proposed Mahalanobis distance metric in Equation (5) of \cite{wang2011metric} is equivalent to learning a squared Euclidean distance of points transformed by the following linear transformation:

\begin{align}
\mathbfup{\Phi}_{\mathbfup{\mu}, \mathbfup{\Omega}}(x) & = \mathbfup{\Omega}\mathbfup{\Phi}_{\mathbfup{\mu}}\left(x\right) \nonumber\\
& = \mathbfup{\Omega}\left[\sqrt{\mu_1}\mathbfup{\Phi}_1(x),...,\sqrt{\mu_A}\mathbfup{\Phi}_A(x)\right] \nonumber
\label{eq:metricMap}
\end{align} 

Which is the mapping defined in \eqref{eq:metricMap} from Section \ref{sec:related}.

\bibliography{RCKL-AK}
\bibliographystyle{splncs03}

\end{document}